\documentclass[10pt]{article} %
\usepackage[preprint]{tmlr}

\usepackage{amsmath,amsfonts,bm}

\def\eqref#1{equation~\ref{#1}}

\def\1{\bm{1}}

\DeclareMathAlphabet{\mathsfit}{\encodingdefault}{\sfdefault}{m}{sl}
\SetMathAlphabet{\mathsfit}{bold}{\encodingdefault}{\sfdefault}{bx}{n}

\newcommand{\E}{\mathbb{E}}

\newcommand{\Var}{\mathrm{Var}}

\usepackage{hyperref}
\usepackage{url}
\usepackage{graphicx}
\usepackage{booktabs}
\usepackage{amsmath,amssymb,amsthm}
\usepackage{microtype}
\usepackage{xcolor}
\usepackage{float}

\graphicspath{{figures/}}
\newtheorem{proposition}{Proposition}
\newcommand{\Rstar}{R^{*}}
\newcommand{\Rhat}{\hat R}
\newcommand{\gap}{\Delta}
\newcommand{\ks}{Kuramoto-Sivashinsky}
\newcommand{\ci}[3]{#1\,[#2,\,#3]}

\title{Spatial Context or Memory? Sizing the Inputs of Neural\\ Surrogates for Partially Observed Dynamical Fields}

\author{\name Ruige Kong \email rk684@uw.edu \\
      \addr University of Washington, Seattle, WA, USA}

\def\month{MM}  %
\def\year{YYYY} %
\def\openreview{\url{https://openreview.net/forum?id=XXXX}} %

\begin{document}
\raggedbottom

\maketitle

\begin{abstract}
Neural surrogates that forecast a physical field from sparse, noisy sensors read a window of nearby sensors and a history of past snapshots. Practitioners choose both by comparing fitted models with each other. Such comparisons cannot tell a model's shortfall from missing information. We measure the tradeoff between window and history against the Bayes-optimal error of the same readings. An exact Gaussian analysis shows that noisy or sparse observation of a field with local dynamics makes the optimal forecast read distant sensors only through spatial correlation. On a forced linear field with a known optimum, trained models that read only the window come within 1\% of it. We then train such models on viscous Burgers flow and the chaotic \ks{} (KS) equation, with the deciding comparisons fixed before any test evaluation. In the fields other than KS, eight snapshots remove a smaller fraction of the one-snapshot error as the lead time, the number of steps to the forecast target, grows. In KS, for windows up to one correlation length, they remove a larger fraction. This lead-time inversion persists across sensor densities, three architectures and a two dimensional domain. It shrinks as damping lowers the Lyapunov exponent. Damping also removes the linear instability of KS. We therefore tie the inversion to instability and leave chaos and linear amplification unseparated. For surrogate design the results give two rules. Size the window in correlation lengths of the field. Split a fixed input budget between window and history for the lead time the model will serve, because the best split moves toward wider windows as the lead time grows. In Burgers flow the split chosen at lead time one ends with 13.5 times the best error at lead time eight.

\end{abstract}

\section{Introduction}
\label{sec:intro}

A neural surrogate of a sparsely observed field must decide how much spatial context and how much memory to read. Many physical fields, such as a temperature, a velocity or a concentration, obey local dynamics. The rate of change at each point depends only on the field in a small neighbourhood of that point, as in diffusion, advection and most partial differential equations of physics. The same choice arises for learned estimators and forecasters of partially observed dynamical systems, because they must infer the hidden local state from the readings they are given. Sensors observe such fields at sparse points and with noise. A surrogate that forecasts the field at a query point, the location where a forecast is wanted, reads a window of sensors around that point and a history of past snapshots of those sensors. Local forecasters with memory \citep{pathak2018parallel}, local neural operators \citep{ye2024locality} and global operators \citep{li2021fno} fix these two choices differently. Recent work asks how much memory an operator needs \citep{buitrago2025memory, hur2026memory}. These studies vary one choice at a time and compare models only with each other. Without the error of the best predictor of the same inputs, they cannot tell a model's shortfall from a lack of information. We supply that reference. Where the field is Gaussian we compute the optimal error in closed form. In the nonlinear fields we use trained models, after checking on a linear field with a known optimum that they reach it.

A natural argument holds that noisy or coarse observation forces the optimal forecast to read distant sensors. We show that this holds only through spatial correlation. Consider a Markov field, whose next state depends only on its current state, observed through noisy or coarse sensors. We call such observation degraded. Degraded readings are in general not Markov in time, because the current readings do not determine the current state. From this, an argument we call the inference hypothesis concludes that the Bayes-optimal forecast of the next value at a point must infer the hidden state from distant sensors, even though the dynamics are local. Exact Gaussian conditioning gives a counterexample (Section~\ref{sec:exact}). With one snapshot of a state that is uncorrelated across grid points, the optimal predictor reads exactly the grid points that one step of the dynamics links to the target, at every noise level. The counterexample is exact. So the step from degraded observation to distant sensors fails in general. Typical fields still satisfy the conclusion, because stationary local dynamics produce correlated states. A correlated state, or a history of snapshots, does extend the predictor to distant sensors.

History can replace a wider window only to the extent that the window's own past carries the information that distant sensors would supply. We call the extra error of a window, compared with a window that holds every sensor, its spatial gap. We say that history substitutes for space when extra past snapshots of a window close part of its spatial gap. On this claim substitution can fail in two ways. In the first failure mode the field has spatial structure between the window's sensors that the window's own past does not carry either. In the second, transport carries the needed information into the window from outside faster than history can supply it. We observe both failure modes. Each depends on the field and its noise level (Section~\ref{sec:context}).

In the chaotic \ks{} (KS) equation, history behaves opposite to the other fields we study. We study three fields on a ring, a randomly forced linear field whose optimal predictor we can compute in closed form, viscous Burgers flow and KS. The lead time is the number of observation steps between the latest snapshot and the forecast target. In KS, for windows up to one correlation length, eight snapshots remove a larger fraction of the one-snapshot error at longer lead times. We call this the lead-time inversion. Burgers flow and the linear field show the reverse at the windows we tested, as do the two linear chains we analyse exactly in Section~\ref{sec:exact}. We had predicted the same for KS. A controlled experiment argues against four causes of the inversion, namely the window's size relative to the correlation length, structure finer than the sensor spacing, the sensor count and the model architecture. When we add damping to KS, the inversion shrinks as the largest Lyapunov exponent, the growth rate of small perturbations, falls. It disappears when the exponent is negative. This trend ties the inversion to instability. It cannot separate chaos from linear instability, the exponential growth of small long-wave perturbations, because damping removes both.

The inversion holds on a second geometry and changes how a surrogate should be configured. On a periodic square it recurs in KS up to one correlation length and stays absent on a linear field with a known optimum. When a fixed number of readings per query is split between window and history, splits with long histories lose ground to wide windows as the lead time grows, in each of the three fields. Two safeguards support these findings. Before any test evaluation we fixed in writing which comparisons decide each experiment and which direction we predict. On the linear field the trained models come within 1\% of the exact optimum of their inputs. Throughout, we keep exact quantities apart from the errors of trained models.\footnote{The repository at \url{https://anonymous.4open.science/r/memory-space-forecasting-5633} holds the code, configurations, analysis plans and the scripts behind the figures and tables.}

For building surrogates the results give a method and two rules. The method scores a trained model against the exact optimum of its own inputs wherever one exists. It separates a model's shortfall from missing information, which comparisons among fitted models cannot do. One rule sizes the window in correlation lengths of the field. In KS the lead-time behaviour of history changes between one and two correlation lengths, at two sensor densities and on two geometries. A window of four sensor spacings inverts with 256 sensors. With 64 sensors its history ratio rises. The other rule, for sensor design, chooses the split between window and history at the lead time the model will serve. A split tuned at lead time one carries too much history at lead time eight. In Burgers flow it ends with 13.5 times the best error.

\section{Setting and definitions}
\label{sec:setting}

\begin{figure}[t]
\centering
\includegraphics[width=\textwidth]{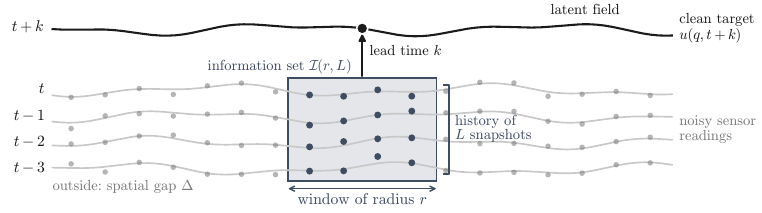}
\caption{The forecasting problem. Sensors read a latent field on a periodic domain with additive noise. A local predictor for the query point reads the sensors inside a window of radius $r$ (dark dots in the shaded box) over the $L$ most recent snapshots. It predicts the clean field value $k$ observation steps ahead (black dot). The grey sensors outside the window carry the spatial gap $\gap$. The exact predictor from these readings has risk $\Rstar(r,L,k)$. A trained model has empirical risk $\Rhat(r,L,k)$.}
\label{fig:setting}
\end{figure}

The comparisons in this paper hold the forecast target fixed and change only the readings a predictor sees (Figure~\ref{fig:setting}). A latent field $u(x,t)$ evolves on a periodic domain of length $D$ with $n$ grid points, recorded at times $t_j = j\,\delta t$. Sensors at locations $\mathcal S$ return $y_j(s) = u(s,t_j) + \varepsilon_j(s)$, with independent Gaussian noise of standard deviation $\sigma$. For a query point $q$, the \emph{window} of radius $r$ contains the sensors within periodic distance $r$ of $q$. The \emph{history} $L$ is the number of most recent snapshots the predictor reads, which we also call memory. The \emph{lead time} $k$ is the number of observation steps to the target, the clean value $u(q, t_{j+k})$. Together these define the information set $\mathcal I(r,L)$, the readings in the window over the history. We score predictors by their mean squared error on the target, divided by the target variance. Each comparison pairs information sets on identical targets, queries, lead time, observation interval and noise realization. A difference between two exact predictors is then a difference of information alone.

The spatial gap isolates the information that lies outside the window. The \emph{exact risk} $\Rstar(r,L,k)$ is the Bayes risk of $\mathcal I(r,L)$, in closed form for Gaussian fields. A trained model's test error is its \emph{empirical risk} $\Rhat(r,L,k)$. Because the information sets are nested in $r$ and in $L$, $\Rstar$ cannot increase in either. The \emph{spatial gap}
\begin{equation}
\gap(r,L,k) = \Rstar(r,L,k) - \Rstar(\Omega,L,k) = \E\big[(m_{\Omega,L} - m_{r,L})^2\big] \ge 0
\label{eq:gap}
\end{equation}
measures how much the sensors outside the window reduce the exact risk at the same history. Here $\Omega$ is the window that holds every sensor and $m_{r,L}$ is the conditional mean given $\mathcal I(r,L)$. The same difference between the errors of two trained models, $\hat\gap$, mixes information with fit and can be negative. For the nonlinear fields we estimate $\Rstar(\Omega,L,k)$ by the error of the best global model, a model that reads every sensor. Gaps in those fields are therefore estimates (Section~\ref{sec:protocol}).

Three quantities measure what history adds. They answer different questions. The \emph{gap ratio} $g(r,k) = \gap(r,8,k)/\gap(r,1,k)$ is the fraction of the spatial gap that eight snapshots leave open. A smaller gap ratio means stronger substitution of history for space. The \emph{history ratio} $\rho(r,k) = R(r,8,k)/R(r,1,k)$, with $R$ exact or empirical as stated, is the fraction of the one-snapshot error that remains with eight snapshots. In absolute units, the \emph{error reduction from history} is $B(r,k) = R(r,1,k) - R(r,8,k)$. Its ratio to the one-snapshot gap, $B/\gap(r,1,k)$, is the \emph{closed fraction}. The gap ratio measures substitution itself. The history ratio needs no global reference. Most of the comparisons we fixed in advance use it, because its estimates carry no error from a reference. A \emph{lead-time inversion} occurs when $\rho(r,8) < \rho(r,1)$, when eight snapshots remove a larger fraction of the one-snapshot error at lead time eight than at lead time one. The \emph{sufficiency radius} $r_\epsilon(L,k)$ is the smallest window whose spatial gap is at most $\epsilon$.

A window has two sizes that can disagree. We report both. The sensor spacing gives the number of sensors a query sees. The correlation length $\ell$, the distance at which the spatial autocorrelation of the field falls to $1/e$, gives the amount of the field's structure that the window contains. A comparison across fields that matches one size leaves the other free, which Section~\ref{sec:leadtime} exploits.

\section{Exact analysis in a linear Gaussian chain}
\label{sec:exact}

A linear Gaussian chain gives every quantity of Section~\ref{sec:setting} in closed form, which makes it an exact reference for the trained models. Let the state $x_j \in \mathbb R^n$ evolve as $x_{j+1} = A x_j + w_j$, where $w_j$ is Gaussian noise. We call the entries of $x_j$ sites, the grid points of a discretized field. The update $A$ is local. Each site's new value depends only on nearby sites. Let the sensors return $y_j = H x_j + v_j$ with $v_j \sim \mathcal N(0, \sigma^2 I)$. We call the sites that one application of $A$ couples to a site its stencil. If the current state has covariance $P$, the conditional mean of the target is linear, $\E[x_{j+k} \mid y_j] = M y_j$ with gain $M = A^k P H^\top (H P H^\top + \sigma^2 I)^{-1}$. The risk of any information set is a conditional variance \citep{kalman1960}. The dynamics enter the cross covariance on the left. A gain with $A^\top$ in that place is correct only for symmetric $A$. For that reason the sweeps below include an advective update, which shifts the field and is not symmetric, as well as a diffusive one, which smooths it. We use no regularization (Appendix~\ref{app:gaussian}).

\begin{proposition}[One degraded snapshot keeps the one-step predictor inside the stencil]
\label{prop:independent}
Let the sensor matrix have orthogonal rows of equal squared norm, $H H^\top = c I$. A selection $H = S$ of any subset of sites has $S S^\top = I$ and $c = 1$. A non-overlapping block average of width $w$ has $c = 1/w$. Let the current state be uncorrelated across sites, $P = pI$ with $p > 0$. Then for every $\sigma \ge 0$,
\[
\E[x_{j+1} \mid y_j] = \frac{p}{c\,p + \sigma^2}\, A H^\top y_j .
\]
For a selection $H = S$ the same conclusion holds with any diagonal $P$. The gain is then $A P S^\top (S P S^\top + \sigma^2 I)^{-1}$, the stencil times a diagonal matrix. In both cases the optimal one-step predictor from one snapshot reads only sensors inside the stencil of the target. Its gain shrinks toward zero as the noise grows.
\end{proposition}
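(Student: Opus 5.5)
The plan is to plug the stated covariance structure directly into the Gaussian conditioning formula quoted just before the proposition, $M = A^k P H^\top (H P H^\top + \sigma^2 I)^{-1}$ with $k=1$, and simplify. First I will justify that formula in this setting. Take $x_{j+1} = A x_j + w_j$, where $w_j$ is independent of $x_j$ and $v_j$. Then the cross covariance is $\mathrm{Cov}(x_{j+1}, y_j) = A P H^\top$ and the reading covariance is $\mathrm{Cov}(y_j) = H P H^\top + \sigma^2 I$. With zero means, joint Gaussianity gives $\E[x_{j+1}\mid y_j] = A P H^\top (HPH^\top+\sigma^2 I)^{-1} y_j$. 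For $\sigma = 0$ the inverse has to exist, and it does because $H P H^\top = cp\,I$ with $cp>0$ (respectively $S P S^\top$ is diagonal with positive entries). So the case $\sigma=0$ needs no pseudo-inverse, and the formula holds for every $\sigma \ge 0$.

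In the isotropic case $P = pI$ I will compute $HPH^\top + \sigma^2 I = p\,HH^\top + \sigma^2 I = (cp+\sigma^2) I$. Its inverse is the scalar $(cp+\sigma^2)^{-1}$, which gives $M = \frac{p}{cp+\sigma^2} A H^\top$. To check the two listed examples: a selection has orthonormal rows, so $c=1$. A non-overlapping block average of width $w$ has rows with $w$ entries $1/w$ and disjoint supports, so the rows are orthogonal with squared norm $w \cdot w^{-2} = 1/w$.

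For a selection $S$ and any diagonal $P = \mathrm{diag}(p_i)$ with $p_i>0$, the key observation is that $S P S^\top$ is the principal submatrix of $P$ on the selected sites. It is therefore diagonal, and $SPS^\top + \sigma^2 I$ is a diagonal matrix $D$ with positive entries. Also $P S^\top = S^\top (SPS^\top)$, because $S^\top$ embeds selected sites and $P$ acts on them diagonally. Hence $M = A S^\top \Lambda$ with $\Lambda = SPS^\top D^{-1}$ diagonal, entries $p_{s}/(p_{s}+\sigma^2)$. This is the stencil map $A S^\top$ times a diagonal matrix.

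Next comes the locality conclusion. Row $q$ of $M$ is row $q$ of $A$ composed with $H^\top$ and then a diagonal (or scalar) factor. For a selection, the entry for sensor $s$ is $A_{qs}$ times a positive scalar, so it is nonzero only if $s$ lies in the stencil of $q$. For a block average, the entry for block $b$ is $\frac1w\sum_{i\in b} A_{qi}$, which vanishes unless block $b$ meets the stencil. In that case "inside the stencil" should be read as "blocks overlapping the stencil", and I will state this interpretation explicitly. Finally, the gain factors $p/(cp+\sigma^2)$ and $p_s/(p_s+\sigma^2)$ decrease to zero as $\sigma\to\infty$.

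The only delicate point I expect is this interpretation for general $H$ with $HH^\top = cI$: the support of $AH^\top$ is governed by which sensor footprints meet the stencil, not by the sensor locations themselves. I will phrase the conclusion as "reads only sensors whose footprint meets the stencil", which reduces to the stated claim for selections.
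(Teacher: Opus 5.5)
Your proposal is correct and follows the paper's proof. Like the paper, you substitute the covariance structure into the gain $A P H^\top (H P H^\top + \sigma^2 I)^{-1}$, use $HH^\top = cI$ (or the diagonality of $SPS^\top$) to make the inverse scalar or diagonal, and read the sparsity off $AH^\top$. Your identity $PS^\top = S^\top(SPS^\top)$, the invertibility check at $\sigma=0$, and your footprint reading for block averages (entry $\frac1w\sum_{i\in b}A_{qi}$ nonzero only when block $b$ meets the stencil) make explicit steps that the paper's one-line ``sparsity of $AS^\top$'' argument leaves implicit.
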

\begin{proof}
The cross covariance is $A P H^\top = p A H^\top$. With $H H^\top = cI$ the observation covariance is $H P H^\top + \sigma^2 I = (c\,p+\sigma^2) I$. Substituting both into the gain gives the first result. With $H = S$ and diagonal $P$, the matrix $S P S^\top$ is diagonal. The inverse $(S P S^\top + \sigma^2 I)^{-1}$ is then diagonal. The product $A P S^\top$ therefore has the sparsity of $A S^\top$.
\end{proof}

Noise and subsampling alone leave a one-snapshot predictor inside the stencil. With every site observed and an uncorrelated state, the spatial gap beyond the stencil is below $2 \times 10^{-16}$ of the target variance. It stays at this level at all eighteen noise levels we tried, from $0$ to $100$ field units (Figure~\ref{fig:gaussian}a, grey squares). The computation agrees with the proposition to machine precision. The counterexample to the inference hypothesis is therefore exact. It covers one snapshot of an uncorrelated state. The next paragraph gives the cost of each assumption.

Spatial correlation is the only route by which degraded observation reaches distant sensors. Typical fields provide it. First, the proposition covers one snapshot. Over a history of $L$ snapshots the dynamics correlate a state that starts uncorrelated. The spatial gap beyond one site then reaches $2.9\%$ of the target variance at $L = 8$, although it falls to $2.5 \times 10^{-7}$ beyond five sites (Appendix~\ref{app:gaussian}). Second, the premise $P = pI$ is atypical for local stochastic dynamics. A stationary state solves $P = A P A^\top + Q$, which is correlated whenever the stencil couples neighbours. So degraded observation gives distant sensors weight in the optimal predictor only through correlation, whether the correlation sits in the current state or builds up over the history. The proposition refutes the inference hypothesis in general. It leaves the hypothesis open in correlated fields, which the rest of the paper studies.

\begin{figure}[t]
\centering
\includegraphics[width=\textwidth]{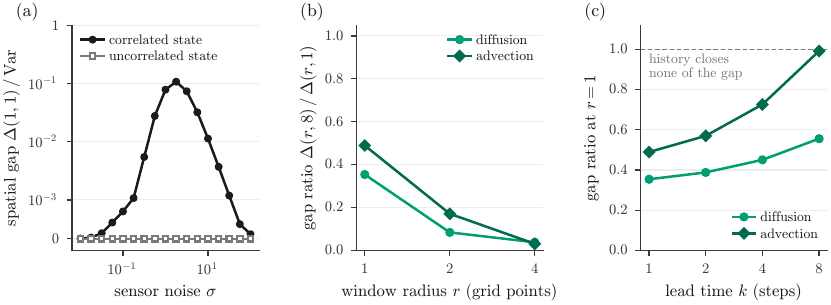}
\caption{Exact results in the linear Gaussian chain (32 sites, nearest-neighbour update, Appendix~\ref{app:gaussian}). (a) The spatial gap beyond one grid point, $\gap(r{=}1, L{=}1)/\Var$, against the sensor noise $\sigma$. The log axis omits $\sigma = 0$, where the gap of the uncorrelated state is also zero. With an uncorrelated current state the gap is zero at every noise level (Proposition~\ref{prop:independent}). With the correlated state it peaks and then falls. (b) The gap ratio $\gap(r,8)/\gap(r,1)$ at the peak noise level, for the diffusive and the advective update. It is shown at the three windows whose one-snapshot gap exceeds $10^{-4}$ of the target variance. A lower gap ratio means that eight snapshots close more of the spatial gap. (c) The same ratio at $r=1$ against the lead time $k$. Under advection it approaches one. History then closes almost none of the spatial gap.}
\label{fig:gaussian}
\end{figure}

With a correlated state, distant sensors matter most at moderate noise. We start from a smooth, spatially correlated covariance and propagate it eight steps under the diffusive update. The spatial gap beyond one grid point then rises with noise to a peak of 11\% of the target variance near $\sigma \approx 1.8$. Beyond the peak it falls about 900-fold by $\sigma = 100$ (Figure~\ref{fig:gaussian}a, black). Because noise shrinks the whole gain, the predictor looks wider only once its weights are rescaled to a fixed total. Even then it widens only up to the peak. Strong noise therefore narrows the optimal predictor again. More noise does not mean more reliance on distant sensors, even in a correlated field.

History substitutes for space better as the window widens. At the peak noise level the gap ratio falls from $0.35$ at $r = 1$ to $0.084$ at $r = 2$ and $0.038$ at $r = 4$ (Figure~\ref{fig:gaussian}b). The sufficiency radius at $\epsilon = 10^{-3}$ shrinks from eight grid points with one snapshot to four with two or more. The ordering is plausible, because the past of a wider window determines the state at its own edge, which is the information the outside sensors would otherwise supply. It is exact for the linear chain. Section~\ref{sec:context} tests whether it holds in the nonlinear fields.

History closes less of the spatial gap as the lead time grows. Under transport almost none of the effect remains. At $r = 1$ the gap ratio rises from $0.35$ at $k = 1$ to $0.56$ at $k = 8$ under diffusion (Figure~\ref{fig:gaussian}c). Under the advective update, which moves the field $0.1$ sites per step, it rises from $0.49$ to $0.99$. Although the field drifts only $0.8$ sites over eight steps, the forecast then depends on field arriving from upstream, which the window did not observe. The transport limit is therefore severe even for slow drift. Both orderings, with window and with lead time, are exact for these two chains. We treat them as predictions for the nonlinear fields.

\section{Fields, models and protocol}
\label{sec:protocol}

\begin{figure}[t]
\centering
\includegraphics[width=\textwidth]{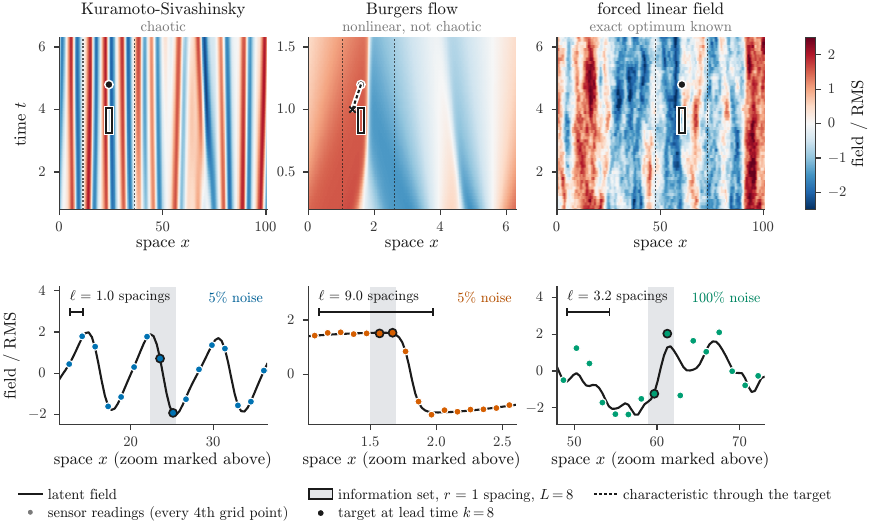}
\caption{The three fields and their sensor arrays. Top: one training trajectory in space and time. Each panel marks one information set, a window of one sensor spacing over eight snapshots (outlined box). The black dot is its target eight steps ahead. In Burgers flow the dotted curve follows the characteristic through the target back to the current time. Its foot (cross) lies upstream of the window, which is the transport failure of Section~\ref{sec:context}. Dashed lines mark the region enlarged below. Bottom: the snapshot at the current time with the readings a model receives. The shaded band is the window of the information set. Outlined dots are the sensors inside it. A bar draws the correlation length $\ell$ to scale. The correlation length is one sensor spacing in KS, nine in Burgers flow and three in the linear field. The readings on the linear field are far noisier than on the other two fields.}
\label{fig:fields}
\end{figure}

The three fields separate calibration from the effect of nonlinear dynamics (Figure~\ref{fig:fields}). The randomly forced linear field is advection-diffusion driven by Gaussian forcing, an exact Gaussian chain between observation times, which gives $\Rstar(r,L,k)$ for every information set. Its diffusive variant has no drift. Its advective variant drifts at constant speed. The two nonlinear fields, viscous Burgers flow and the chaotic \ks{} (KS) equation \citep{kuramoto1978, sivashinsky1977}, share the quadratic term $u\,\partial_x u$. They differ in what they do to small perturbations. KS amplifies small perturbations at every wavelength longer than $2\pi$, which we call its linear instability. It is also chaotic, with a largest Lyapunov exponent of $0.089$ per time unit on our domain. Burgers flow damps small perturbations. A contrast between the two nonlinear fields therefore points to instability. It cannot isolate instability, because Burgers flow also decays from a random initial state, does not become stationary and forms shocks.

We match windows across fields by the number of sensors a query sees. This choice leaves the window's size in correlation lengths free. The fields differ in that size. We integrate the nonlinear fields with a fourth-order exponential time-differencing scheme on 256 grid points \citep{kassam2005etdrk4}. Unless stated otherwise, point sensors sit at every fourth grid point, with noise of 5\% of the field's training root mean square on the nonlinear fields and 100\% on the linear field. Appendix~\ref{app:systems} gives the settings and checks. The sensor spacing is then one correlation length in KS, a ninth of one in Burgers flow and a third of one in the linear field. Section~\ref{sec:leadtime} uses this difference to separate two explanations of its main result.

Each local model sees its own information set and nothing more. A local model reads only the sensors in its window. We call such a model a patch model. Each multiplies its input by a fixed mask before any learned operation. The patch CNN applies three convolutions of width three to the masked window and averages the result over the window. In one experiment a patch GRU and a patch MLP replace it. The linear patch is the empirical counterpart of the Gaussian gain. For the global reference we use a Fourier neural operator \citep{li2021fno} that reads every sensor over the same history and keeps every mode the array resolves (Appendix~\ref{app:models}). All models train with AdamW and cosine decay over 150 epochs. Each combination of window, history and lead time has five training seeds, or three on the linear fields. Because of the mask, a difference between two windows comes from information and fit. No model reads inputs outside its window.

We fixed the deciding comparisons before seeing test data. Before training any model of an experiment, we wrote down the comparisons that decide its outcome, the direction we predicted for each and the interval level. We call these the planned comparisons. Each analysis ran first on the validation split. We then recorded a content hash of the written plan and evaluated the test split once, with the same code. The numbers below come from the test split. Intervals come from a crossed bootstrap that resamples training seeds and test trajectories in 2000 draws and keeps each comparison paired (Appendix~\ref{app:stats}). Within each experiment a Bonferroni correction covers its planned comparisons, which gives 99.17\% intervals for six comparisons and 98.75\% for four. All other quantities carry 95\% intervals. This procedure keeps us from choosing comparisons after seeing their results. Two exceptions weaken it. We wrote the KS predictions after a single-seed pilot study scored on validation. We also designed the experiment of Section~\ref{sec:leadtime} that separates two explanations after the KS and Burgers test results were known. It reuses their test trajectories. Appendix~\ref{app:plans} lists all fifty planned comparisons, including the ten that did not come out as predicted.

Gaps in the nonlinear fields are estimates against the best global model. In both nonlinear fields the Fourier neural operator is the best of our models at each history and lead time. We therefore quote gaps against it. Such gaps understate the spatial gap where the operator falls short of the optimum and overstate it where the local model falls short. A reference that fits poorly would inflate the gaps, since weak baselines are a documented source of overoptimism \citep{mcgreivy2024weak}. On the linear field the operator fits the heavy sensor noise and sits 2\% to 3\% above the optimum. There we compare with the exact optimum directly.

On the linear field the trained models reach the exact optimum of their inputs. We compared each fitted error there with the exact predictor on the same test windows, noise realizations and queries. Across all thirty combinations of window, history and lead time in each variant, the linear patch sits $0.01\%$ to $0.35\%$ above its exact optimum. For the patch CNN the excess is $0.05\%$ to $0.65\%$ (Appendix~\ref{app:calib}). Its history ratios also match the exact ones to within $0.002$ at all sixteen combinations of window and lead time. The models therefore recover the structure as well as the level of the risk. This calibration is strong for linear structure. Because the optimum on this field is linear, it cannot show that the CNN captures nonlinear interactions across a wide window (Section~\ref{sec:limitations}).

\section{Results}
\label{sec:results}

\subsection{Window and history in two nonlinear fields}
\label{sec:context}

\begin{figure}[t]
\centering
\includegraphics[width=\textwidth]{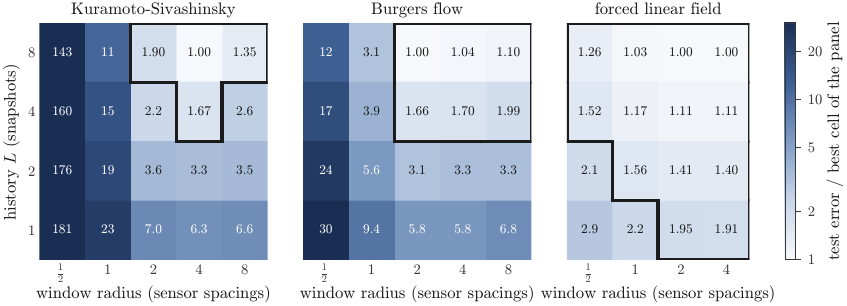}
\caption{Trading window for history at fixed error, on the test split at lead time one. Each cell is one trained patch CNN. The number in the cell and its shade give the cell's test error relative to the best cell of its panel. The heavy outline encloses the cells within twice that best error. An outline with a diagonal edge means that a longer history compensates for a narrower window. At one spacing the excess is eleven times the best error in KS and three times in Burgers flow.}
\label{fig:tradeoff}
\end{figure}

In both nonlinear fields history removes more of the error at wider windows, as we predicted. In KS with 64 sensors the history ratio falls from $0.79$ with one or two sensors to $0.49$, $0.27$ and $0.16$ at the next three windows. Burgers flow follows the same order, with ratios of $0.41$, $0.33$ and $0.17$. Each planned difference excludes zero at its corrected level, also with 32 KS sensors (Appendix~\ref{app:plans}). The ordering is therefore firm at short lead times. History cannot rescue the narrowest windows, however (Figure~\ref{fig:tradeoff}, error curves in Appendix~\ref{app:surface}). In both nonlinear fields no history we tried brings a window narrower than two sensor spacings within twice the best error. On the linear field, by contrast, a longer history compensates for a narrower window along a diagonal edge.

Substitution itself, measured by the gap ratio, follows the exact ordering only for small windows at short lead times. In KS at lead time eight the gap ratio rises from $0.32$ at one correlation length to $0.91$ at four. These ratios divide two fitted gaps and come from an analysis added after the test evaluation. We treat them as descriptive (Appendix~\ref{app:contrasts}).

The spatial gap grows sharply as the array thins. With 64 KS sensors a window of eight or nine sensors sits $\ci{1.7}{1.4}{2.0} \times 10^{-4}$ above the global operator at one snapshot. With 32 sensors the local models fall short by $\ci{0.054}{0.046}{0.063}$, an upper estimate of the information gap (Appendix~\ref{app:suff}).

The first failure mode, structure between sensors that the window's past does not carry, appears in KS and fades in the smoother or noisier fields. For queries that see exactly one sensor, eight snapshots of that sensor close $\ci{0.195}{0.183}{0.208}$ of the one-snapshot spatial gap in KS. In Burgers flow they close $\ci{0.75}{0.68}{0.94}$. On the diffusive and advective linear fields they close $\ci{1.50}{1.46}{1.55}$ and $\ci{0.92}{0.91}{0.93}$, where the exact predictor gives $1.51$ and $0.93$. A value above one means that eight snapshots of one sensor beat one snapshot of the whole array. The intervals do not overlap. The fields therefore differ. We attribute the spread to what the sensors miss. In KS that is structure between sensors, which one sensor's past barely carries. In Burgers flow, smooth over nine spacings, it is mostly noise and the position of the occasional shock. On the linear field it is noise. The fields also differ in noise level, 5\% against 100\%. This attribution therefore remains one of two possible explanations.

Transport produces the second failure mode once the field moves within the lead time. In Burgers flow the mean displacement over eight steps is 1.5 sensor spacings. At the window of half a sensor spacing the closed fraction falls from $0.81$ at lead time one to $0.56$ at lead time eight, a difference we predicted in advance, of $\ci{-0.25}{-0.52}{-0.17}$. Against the global operator the fall is largest at two spacings, about the displacement, from $2.51$ to $0.53$. Its interval is wide but excludes zero. This location of the largest fall matches what transport predicts. Our evidence rests on one drift speed in one field (Section~\ref{sec:limitations}).

\subsection{The lead-time inversion}
\label{sec:leadtime}

\begin{figure}[t]
\centering
\includegraphics[width=\textwidth]{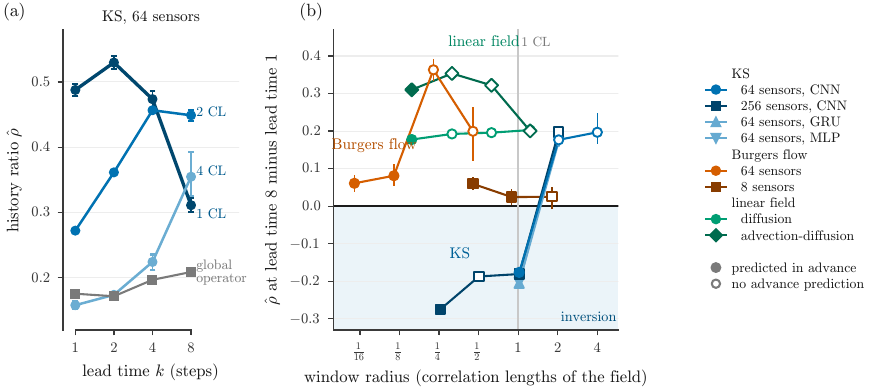}
\caption{The lead-time inversion (test split). (a) The history ratio $\hat\rho = \hat R(r,8)/\hat R(r,1)$ in KS with 64 sensors at lead times one, two, four and eight, for windows of one, two and four correlation lengths (CL) and the global operator, with 95\% intervals. A lower ratio means that eight snapshots remove a larger fraction of the one-snapshot error. (b) The same comparison as a difference, $\hat\rho$ at lead time eight minus $\hat\rho$ at lead time one, for each window of each field, with 99.17\% intervals. The horizontal axis gives the window radius in correlation lengths of each field. Colour gives the field. Marker shape gives the array or the local model. Filled markers are comparisons whose sign we predicted in advance. Open markers carry no advance prediction. Points below zero are inversions. Windows spanning the whole ring, which are limited by optimization, are omitted. Table~\ref{tab:leadtime} lists the values.}
\label{fig:inversion}
\end{figure}

Only KS inverts the lead-time ordering, at windows up to one correlation length (Figure~\ref{fig:inversion}). The exact analysis predicts that history removes a smaller fraction of the error as the lead time grows. Both linear variants and Burgers flow show this at each window we tested. The Burgers history ratio rises by $\ci{+0.061}{+0.039}{+0.082}$ at the window of half a sensor spacing and by $\ci{+0.081}{+0.055}{+0.112}$ at one spacing. In KS, at one sensor spacing, which is also one correlation length, the ratio falls from $0.488$ to $0.311$, a difference of $\ci{-0.177}{-0.186}{-0.167}$. It rises, as we had predicted for all windows, only at two and four correlation lengths. Each lead-time comparison keeps its sign when we leave out training seeds one at a time. The inversion is therefore large relative to its interval and stable across seeds. Lead times two and four show how it develops (Figure~\ref{fig:inversion}a). At one correlation length the ratio first rises, from $0.488$ to $0.530$ at lead time two, as in the other fields. Only then does it fall, to $0.474$ and $0.311$. The inversion builds over several observation steps.

The inversion arises because history's error reduction grows faster than the error itself in KS. The error reduction $B$ grows with lead time at small windows in both nonlinear fields, because the errors themselves grow. In KS at one correlation length $B$ grows 5.2-fold from lead time one to eight, against 3.9-fold for the one-snapshot error. In Burgers flow at half a sensor spacing it grows 6.7-fold, against 7.4-fold (Appendix~\ref{app:contrasts}). History therefore removes more error at longer lead times in both fields. Only the balance between the two growth rates separates them. The inversion therefore concerns proportions.

The inversion could come from instability or from resolution. The two explanations predict opposite outcomes in two further arrays. Under instability, KS amplifies small perturbations, which Burgers flow damps. An error in the window's current state then grows over a long forecast. History reduces that error. Under resolution, the inverting window matters because it is the only one in these experiments whose radius equals the correlation length of its field. An inversion there could need spatial structure at that scale, with no role for instability. To tell the two apart, we ran three variants and fixed in advance how each outcome would be read. The first observes KS at every grid point, a quarter of a correlation length apart, which leaves nothing unresolved. The second observes Burgers flow through eight sensors, 0.89 correlation lengths apart, with 4.8\% of the variance beyond the array. If instability drives the inversion, densely observed KS should still invert and Burgers flow should not. If resolution drives it, Burgers flow should invert at a window near its correlation length. KS should then stop inverting at windows well inside its own. The third keeps the KS array and replaces the patch CNN with the patch GRU and the patch MLP.

The outcome favours instability over four alternative causes. Each of the six planned comparisons excludes zero with the sign the instability explanation predicts (Table~\ref{tab:decisive}). Densely observed KS inverts at a quarter, a half and one correlation length, most strongly at the smallest ($\ci{-0.275}{-0.281}{-0.270}$). At two its ratio rises again. Burgers flow through eight sensors shows a rising ratio at both windows. Across the two Burgers arrays no window from $0.06$ to $3.6$ correlation lengths inverts. The GRU and the MLP reproduce both signs, with one-snapshot errors that match the CNN's to two digits. So the experiment argues against the window's size relative to the correlation length, unresolved structure, the sensor count and the architecture as causes. The interpretation we fixed in advance reads the outcome as evidence that chaos is necessary and, inside one correlation length, sufficient. That reading claims more than the design supports, because chaos changes here only together with the equation. KS and Burgers flow also differ in stationarity, shocks, the sign of the second-derivative term, the observation interval and the ratio of noise to local structure.

\begin{figure}[t]
\centering
\includegraphics[width=\textwidth]{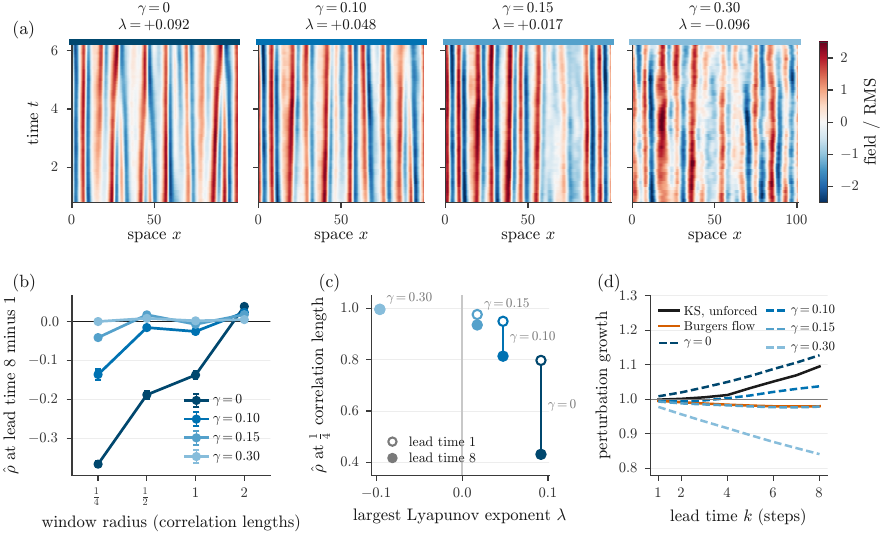}
\caption{The damped and forced KS family (test split). (a) One training trajectory of each damped field in space and time, with the damping $\gamma$ and the largest Lyapunov exponent $\lambda$. The coloured bar above each panel gives the field's colour in the other panels, darker for more chaotic fields. (b) History ratio at lead time eight minus lead time one against window radius, for the four damping levels, with 95\% intervals. (c) The history ratio at a quarter of a correlation length at lead times one and eight, against each field's largest Lyapunov exponent. (d) Median growth of a small perturbation one correlation length wide over the lead times of the experiments, on 96 states of each training set. Over eight steps the perturbation grows by at most 13\%.}
\label{fig:damping}
\end{figure}

Within one equation, the inversion shrinks steadily as damping lowers the Lyapunov exponent. We add uniform linear damping $\gamma$ to KS, with the same random forcing on the sixteen longest wavelengths to keep the field in motion (Figure~\ref{fig:damping}a). Four damping levels, $\gamma = 0$, $0.10$, $0.15$ and $0.30$, give largest Lyapunov exponents of $+0.092$, $+0.048$, $+0.017$ and $-0.096$, with correlation lengths between $1.62$ and $1.89$. We observe every grid point (Appendix~\ref{app:systems}). At a quarter of a correlation length the lead-time difference is $\ci{-0.366}{-0.376}{-0.356}$, $-0.136$, $-0.041$ and $\ci{+0.0004}{-0.0007}{+0.0015}$ from the most chaotic field to the least, with 98.75\% intervals (Figure~\ref{fig:damping}b). This order holds in all 2000 bootstrap draws. The trend with the exponent is therefore stable under resampling. Only our prediction for the non-chaotic field, a difference above zero, is unresolved.

The non-chaotic field cannot show whether the inversion needs chaos, because history barely reduces its error. Its history ratios are $0.996$ and $0.997$ at lead times one and eight (Figure~\ref{fig:damping}c). The strongly damped field is nearly Markov in the window's readings. One snapshot already determines the window's state. Its remaining error is new forcing, which no history can predict. With stronger damping the error reduction at lead time one also falls, from a history ratio of $0.80$ to $0.95$ and $0.98$. The trend with damping therefore ties the inversion both to the exponent and to how much the window's past reduces the error at all. Damping also removes the band of wavelengths that KS amplifies, whose peak growth rate falls from $0.25$ to $-0.05$ across the four fields. For this reason the experiment cannot separate chaos from linear instability.

The perturbation growth over our lead times is too small to explain the inversion by itself. Over lead time eight, only $0.071$ Lyapunov times of KS, a small perturbation one correlation length wide grows by a median of 10\% in KS and shrinks by 2\% in Burgers flow (Figure~\ref{fig:damping}d). The inversion appears where the dynamics amplify small perturbations, tracks the rate of that amplification and is absent wherever they decay. We therefore claim an association with instability and leave its mechanism open.

\subsection{The inversion in two dimensions and a design consequence}
\label{sec:twod}

\begin{figure}[t]
\centering
\includegraphics[width=\textwidth]{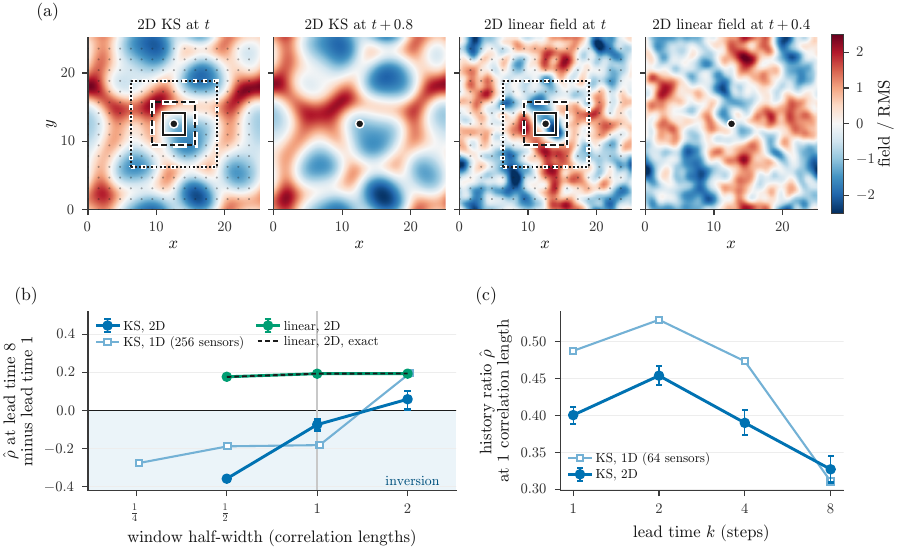}
\caption{The window and history tradeoff on the periodic square (test split). (a) The two dimensional KS equation and the two dimensional forced linear field at the current time $t$ and eight observation steps later. Grey dots are the sensors. The three squares are windows of half-width a half, one and two correlation lengths around one query (black dot), which is also the point the forecast targets. KS changes its pattern within the lead time. (b) History ratio at lead time eight minus lead time one against window half-width, with 99.17\% intervals. Points below zero are inversions. Pale squares give KS in one dimension, the dashed line the exact linear value. (c) The KS history ratio at one correlation length against lead time, with 95\% intervals, in one and two dimensions.}
\label{fig:2d}
\end{figure}

A ring could favour the inversion, because a window's edge there is two points and transport has one direction. We therefore repeated the experiment on the periodic square of side $8\pi$ with $64 \times 64$ grid points (Figure~\ref{fig:2d}a, Appendix~\ref{app:twod}). Two dimensional KS is chaotic there, with a largest Lyapunov exponent of $0.18$. A forced linear field with a known optimum serves as control, with both fields set to a correlation length of $\pi$ so that their windows contain comparable structure. We fixed seven predictions before either data set existed.

On the square KS inverts up to one correlation length and the linear field does not invert. Each of our seven predictions came out as predicted on validation and on test. Two dimensional KS inverts at half a correlation length, $\ci{-0.357}{-0.371}{-0.345}$. It inverts again at one, $\ci{-0.073}{-0.104}{-0.046}$ (Figure~\ref{fig:2d}b). At two the ratio rises again, $\ci{+0.060}{+0.006}{+0.105}$. No window of the linear field inverts. Its fitted lead-time differences match the exact ones to within $0.001$. At one correlation length the KS history ratio rises at lead time two before it falls, as on the ring (Figure~\ref{fig:2d}c). So the boundary between one and two correlation lengths holds on a second geometry, in a field that is chaotic at twice the rate of the ring. Our evidence is weaker at the edges. At one correlation length the inversion is $-0.07$ against $-0.18$ on the ring. At two the rise only just excludes zero.

As the lead time grows, splits of a fixed input budget with long histories lose ground to wide windows in each field (Appendix~\ref{app:budget}). We split about 36 readings per query between window and history on newly generated data sets of the three one dimensional fields. In Burgers flow the split chosen on validation at lead time one ends with 13.5 times the error of the best split at lead time eight. The shift follows from the growth of the spatial gap with lead time that Section~\ref{sec:exact} predicts. Its direction is resolved in each field. Its cost is unresolved in KS, where reusing the choice made at lead time one raises the error by 12\%, with a 98.75\% interval of $[-1\%, +24\%]$. For a sensing system with a fixed number of readings per estimate, the allocation between neighbours and past readings should therefore follow the forecast horizon.

\section{Related work}
\label{sec:related}

Prior work on memory varies the history at a fixed spatial input and compares fitted models with each other. \citet{buitrago2025memory} show that memory reduces neural-operator error under lossy observation. A state space neural operator combines memory and space \citep{koren2026ssno}. \citet{hur2026memory} learn how much memory an operator needs and find that it falls with resolution. Mori-Zwanzig theory explains why partially observed dynamics acquire memory \citep{chorin2000optimal, ma2019memory}. Delay embedding shows that the history of noise-free observations can reconstruct a hidden state \citep{takens1981detecting}. That is the route by which history substitutes for space. Our two failure modes mark where the route fails. \citet{lozanoduran2026ladder} derives information limits for reduced models of truncated KS. \citet{wang2026knee} pick a PDE surrogate's history length by a tolerance rule like our sufficiency radius. We vary both against an exact reference.

Spatial statistics and data assimilation answer our question at a single time. In the screening effect behind nearest-neighbour Gaussian processes \citep{vecchia1988, datta2016nngp}, kriging from nearby observations nearly matches kriging from all \citep{stein2002screening, stein2011screening}. Proposition~\ref{prop:independent} is its exact, degenerate case. Ensemble Kalman filters localize each update to a radius that corrects sampling error in estimated covariances \citep{houtekamer1998enkf, hamill2001distance, hunt2007letkf}. Our sufficiency radius belongs to the exact predictor and changes with history. Emulators trained by data assimilation from sparse, noisy observations lose skill smoothly with noise and abruptly once less than half the domain is observed \citep{brajard2020combining}. \citet{tsiamis2020kalman} bound how far a Kalman filter learned for an unknown, partially observed linear system falls from the optimal one. We measure that kind of shortfall for learned forecasters. \citet{covert2020sage} use the same risk difference for feature importance. Effective receptive fields \citep{luo2016erf} describe one trained model, the spatial gap the observation channel.

Local forecasting of spatiotemporal chaos combines windows with memory, without asking how the share of the error that memory removes changes with lead time. Local states of nearby sites and time delays go back to \citet{parlitz2000local}. Noisy data favour long delays with small neighbourhoods \citep{isensee2020local}. For KS, parallel reservoir computers forecast from local windows with memory \citep{pathak2018parallel, vlachas2020backprop}, coupled local patches without memory \citep{constanteamores2024}. \citet{fleddermann2025chaos} find that the best neighbourhood length is on the scale of the spatial correlation, in line with our boundary at one to two correlation lengths. Local causal states are the optimal local information set with history \citep{shalizi2006lightcone}. \citet{ye2024locality} match the receptive range of a local neural operator to the task. For local particle filters, \citet{rebeschini2015local} show that dense exact dependence still allows good local approximation. Recent operators learn from partially observed fields \citep{hou2026latent, morel2025diffusion}. \citet{bhardwaj2025fieldformer} argue that a longer time window cannot fully compensate for missing spatial coverage. In ecological time series, memory and nonlinearity improve forecasts most where the Lyapunov exponent is positive \citep{munch2026memory}. None of these studies varies the lead time at a fixed window.

\section{Limitations}
\label{sec:limitations}

Our nonlinear gaps rest on fitted models whose optimality we cannot verify. The linear-field calibration of Section~\ref{sec:protocol} does not carry over to chaotic fields at 5\% noise, where gaps can be a few times $10^{-4}$. Only the patch GRU and MLP test whether the patch CNN, which sums features over a seven-point footprint, captures wide-window interactions. They do so at one and two correlation lengths in KS. Wide windows therefore carry the most uncertainty. The small-window results behind the inversion rest on three agreeing architectures.

Several confounds limit how far the results generalize. Each field has one noise level, which mixes field and noise in the first failure mode. The second failure mode rests on one drift speed and one window. The inversion rests on one equation, whose damped variants vary chaos and linear instability together, at lead times of at most $0.071$ Lyapunov times. A field with chaos but no linear instability, or the reverse, would separate the two. The two dimensional test covers one chaotic equation on one domain, where the rise at two correlation lengths only just excludes zero. These confounds bound the explanation of the inversion more than its existence, which holds across arrays, architectures and geometries.

\section{Conclusion}
\label{sec:conclusion}

Under degraded observation, a locally evolving field's optimal forecast reaches distant sensors only through spatial correlation. The same exact reference calibrates learned surrogates, which reach it within 1\% on the linear field. In unstable KS, a small window's history removes a larger fraction of the error at longer lead times, the reverse of the stable fields we studied. A surrogate should size its window in correlation lengths and split its input budget for the lead time it serves. Chaos and linear amplification remain to be separated as causes of the inversion. A field with chaos but no linear instability, or the reverse, would separate them.

\section*{Acknowledgments}
We thank Richard E. Turner for supervising the undergraduate project from which this work grew.

\bibliography{references}
\bibliographystyle{tmlr}

\appendix
\section{The linear Gaussian chain}
\label{app:gaussian}

The chain is small enough for exact conditioning in double precision without regularization. It has $n = 32$ sites on a ring. The update is the nearest-neighbour average $A = (1 - a_{-} - a_{+}) I + a_{-} T + a_{+} T^\top$ with the cyclic shift matrix $T$, where $a_{-} = a_{+} = 0.1$ for the diffusive update and $a_{-} = 0.15$, $a_{+} = 0.05$ for the advective one, which drifts by $0.1$ sites per step. Process noise has standard deviation $0.01$ per step. The uncorrelated state is $P = I$ with no evolution. The correlated state starts from the covariance of a smooth random field, $P_0 = (I + 4(2I - T - T^\top))^{-1}$, propagated eight steps under the update with process noise. Its correlation therefore comes from the smooth initial covariance as well as from the dynamics. Each site returns a reading with independent noise of standard deviation $\sigma$ on a logarithmic grid of eighteen points from $0$ to $100$ field units. The target is the clean value at one site $k$ steps ahead, with $k \in \{1, 2, 4, 8\}$. The window of radius $r$ holds the sites within $r$ of the target site over the last $L \in \{1, 2, 4, 8\}$ observation times. All risks are conditional variances computed in double precision with no regularization. For every conditioning covariance we record the rank and condition number. None in the reported sweeps needed truncation. A separate sweep with sensors on every second site gave gap ratios above one at large windows, where the one-snapshot gap is below $10^{-4}$ of the target variance. We omit it from the results, because a ratio of two such small gaps carries no information.

History alone carries the optimal predictor past the stencil, even from an uncorrelated start (Table~\ref{tab:history}). The state is uncorrelated, $P = I$, at the first of the $L$ snapshots. It then evolves under the update with process noise until the last snapshot. With one snapshot the spatial gap beyond the stencil is zero to machine precision. With two or more snapshots the gap is positive. It falls by two to three orders of magnitude per site of distance. Proposition~\ref{prop:independent} therefore needs its single-snapshot premise. The extra reach that history adds decays fast with distance.

\begin{table}[ht]
\centering\footnotesize\setlength{\tabcolsep}{5pt}
\caption{Exact spatial gap beyond radius $r$, $\gap(r, L, 1)/\Var$, when the state is uncorrelated at the first of $L$ snapshots and every site is observed. Each entry is the largest value over six noise levels from $\sigma = 0.1$ to $10$, with process noise $0.01$. With one snapshot the gap is at machine precision at every radius. Entries below $10^{-13}$ are at the precision of double arithmetic.}
\label{tab:history}
\begin{tabular}{llccccc}
\toprule
Update & $L$ & $r=1$ & $r=2$ & $r=3$ & $r=5$ & $r=8$ \\
\midrule
diffusive & 1 & $<10^{-13}$ & $<10^{-13}$ & $<10^{-13}$ & $<10^{-13}$ & $<10^{-13}$ \\
 & 2 & $2.3 \times 10^{-4}$ & $7.5 \times 10^{-7}$ & $3.3 \times 10^{-9}$ & $<10^{-13}$ & $<10^{-13}$ \\
 & 4 & $4.2 \times 10^{-3}$ & $5.6 \times 10^{-5}$ & $7.8 \times 10^{-7}$ & $2.9 \times 10^{-10}$ & $<10^{-13}$ \\
 & 8 & $2.9 \times 10^{-2}$ & $1.9 \times 10^{-3}$ & $5.5 \times 10^{-5}$ & $2.5 \times 10^{-7}$ & $3.3 \times 10^{-11}$ \\
\addlinespace
advective & 1 & $<10^{-13}$ & $<10^{-13}$ & $<10^{-13}$ & $<10^{-13}$ & $<10^{-13}$ \\
 & 2 & $3.0 \times 10^{-4}$ & $1.6 \times 10^{-6}$ & $5.7 \times 10^{-9}$ & $<10^{-13}$ & $<10^{-13}$ \\
 & 4 & $6.5 \times 10^{-3}$ & $6.9 \times 10^{-5}$ & $1.5 \times 10^{-6}$ & $4.6 \times 10^{-10}$ & $<10^{-13}$ \\
 & 8 & $4.5 \times 10^{-2}$ & $3.6 \times 10^{-3}$ & $8.3 \times 10^{-5}$ & $4.3 \times 10^{-7}$ & $4.3 \times 10^{-11}$ \\
\bottomrule
\end{tabular}
\end{table}

\section{Systems and data sets}
\label{app:systems}

Each data set is generated once, checked before use and fixed by content hashes. Table~\ref{tab:systems} lists the three fields and Table~\ref{tab:damped} the damped KS family. We generate each data set once, fix it with content hashes and split it by trajectory into $128$ training, $32$ validation and $100$ test trajectories of $65$ snapshots. For KS we use time step $0.05$ and keep every second step, after a burn in of $10{,}000$ steps, or $500$ time units, from a random smooth initial field. Before use, the KS data set passed resolution and stationarity checks. Burgers flow uses time step $0.0025$, keeps every tenth step and has no burn in. It decays by design and passed a resolution check. Its initial field has eight Fourier modes with independent Gaussian coefficients whose standard deviation falls as the mode number to the power $-1.5$, scaled to unit root mean square. We sample the linear field from its exact stationary law and advance it with its exact propagator every $0.1$ time units. Its forcing has spectral variance proportional to $\exp(-(m/32)^2)$ at mode $m$, with unit amplitude and relaxation $\gamma = 0.05$. For each information set, the exact predictor is the conditional mean given every reading in the window over the $L$ snapshots. We compute it from the stationary covariance and the exact propagator. We apply it to the same test windows, noise realizations and queries as the models. The damped KS fields use the KS time step and cadence and a burn in of $20{,}000$ steps. Their additive forcing has amplitude $0.044$ on modes one to sixteen, drawn afresh at every step and added after it.

Every model of a field sees the same targets and the same noise realization. We normalize every field by one mean and one standard deviation computed over its whole training split. Sensor noise has a standard deviation equal to the stated fraction of the training root mean square. We draw it once per trajectory with a fixed seed. Every model of a field therefore sees the same noise realization. A training example is one trajectory and one forecast time. Forecast times run from the eighth snapshot to the last one that leaves room for the lead time. Every history length is therefore scored on the same targets. This range gives $57$ windows per trajectory at lead time one and $50$ at lead time eight. Each example predicts all 256 grid points. A batch holds 16 examples. On the training split, the correlation length is the first grid separation at which the spatial autocorrelation falls below $1/e$ (Figure~\ref{fig:fieldstats}a). The unresolved fraction is the fraction of the field's spatial variance above the Nyquist mode of the array.

\begin{figure}[H]
\centering
\includegraphics[width=\textwidth]{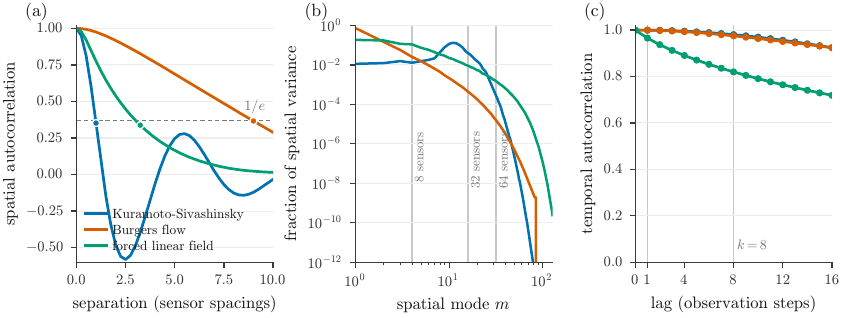}
\caption{The statistics that set the scale of each field, computed from its training split. (a) Spatial autocorrelation against separation in sensor spacings. The dot on each curve marks the first grid separation below $1/e$, which defines the correlation length. (b) Fraction of the spatial variance in each Fourier mode, with the Nyquist modes of the 64-, 32- and 8-sensor arrays. The variance to the right of a line is what that array cannot resolve. (c) Temporal autocorrelation at a fixed point against lag in observation steps, with the two lead times of the experiments.}
\label{fig:fieldstats}
\end{figure}

\begin{table}[ht]
\centering\footnotesize\setlength{\tabcolsep}{3.5pt}
\caption{The three fields, with lengths and times in the units of each equation. The observation interval is the time between snapshots. Noise is the sensor noise standard deviation as a fraction of the field's training root mean square. Where we observed a field through several arrays, entries separated by a slash follow the order of the sensor counts.}
\label{tab:systems}
\resizebox{\textwidth}{!}{\begin{tabular}{lccc}
\toprule
 & Forced linear field & Burgers flow & \ks{} \\
\midrule
Equation & $u_t + c u_x = \nu u_{xx} - \gamma u + f$ & $u_t + u u_x = \nu u_{xx}$ & $u_t + u u_x = -u_{xx} - u_{xxxx}$ \\
Domain length, grid points & $100.5$, $256$ & $2\pi$, $256$ & $100.5$, $256$ \\
Parameters & $\nu{=}1$, $\gamma{=}0.05$, $c \in \{0, 4\}$ & $\nu = 0.04$ & none \\
Forcing & white in time, 32 modes & none, decaying & none, chaotic attractor \\
Time step, observation interval & exact, $0.1$ & $0.0025$, $0.025$ & $0.05$, $0.1$ \\
Initial condition & stationary law & smooth random, no burn in & burn in of $500$ time units \\
Sensors & 64 & 64 / 8 & 64 / 256 / 32 \\
Sensor spacing & $1.57$ & $0.098$ / $0.785$ & $1.57$ / $0.39$ / $3.14$ \\
Noise & 100\% & 5\% & 5\% \\
Correlation length $\ell$ & $5.1$ & $0.88$ & $1.57$ \\
Spacing in units of $\ell$ & $0.31$ & $0.11$ / $0.89$ & $1.0$ / $0.25$ / $2.0$ \\
Variance above array Nyquist & $1.2\%$ & $0.007\%$ / $4.8\%$ & $0.06\%$ / $0$ / $13.5\%$ \\
Change per step (fraction of variance) & not measured & $0.0009$ & $0.0006$ \\
Displacement over 8 steps (spacings) & $0$ / $2$ & $1.5$ & none coherent \\
\bottomrule
\end{tabular}}
\end{table}

The two nonlinear fields are matched in time and differ in space. They change by nearly the same fraction of their variance per observation step. Over eight steps that fraction is $0.049$ for Burgers flow and $0.040$ for KS. In space, Burgers flow is smooth over nine sensor spacings between shocks. Each shock transition spans about five grid points and falls between sensors four points apart. Burgers flow also transports coherently. KS has its spectral peak at a wavelength of nine units and a correlation length equal to one sensor spacing. A lead time therefore covers comparable change in both fields. Their contrast in space and stability is what the comparisons of Section~\ref{sec:results} probe.

The exact risks of the linear field hold to machine precision. The field is an exact Gaussian chain between observation times. Its propagator matches the exponential integrator to $10^{-15}$. Its stationary covariance solves the discrete Lyapunov equation to the same precision. Any excess of a trained model over these risks is therefore fitting error.

\begin{table}[ht]
\centering\footnotesize\setlength{\tabcolsep}{5pt}
\caption{The damped and forced KS family, $u_t + u u_x + u_{xx} + u_{xxxx} + \gamma u = f$, on the KS domain with every grid point observed at 5\% noise. We chose the four damping levels from the candidates $\gamma \in \{0, 0.05, \dots, 0.40\}$ by a rule fixed before the measurements came in. We measured the exponents over 400 time units on three trajectories with common forcing. The peak linear growth rate is $1/4 - \gamma$. Growth is the median amplification of a small perturbation of one correlation length's width over eight observation steps.}
\label{tab:damped}
\begin{tabular}{ccccc}
\toprule
$\gamma$ & Lyapunov exponent & Peak linear rate & $\ell$ (grid points) & Growth over 8 steps \\
\midrule
$0$ & $+0.092 \pm 0.002$ & $+0.25$ & $1.62$ ($4.1$) & $1.13$ \\
$0.10$ & $+0.048 \pm 0.001$ & $+0.15$ & $1.68$ ($4.3$) & $1.04$ \\
$0.15$ & $+0.017 \pm 0.001$ & $+0.10$ & $1.71$ ($4.4$) & $0.98$ \\
$0.30$ & $-0.096 \pm 0.0002$ & $-0.05$ & $1.89$ ($4.8$) & $0.84$ \\
\bottomrule
\end{tabular}
\end{table}

The damped data sets pass the checks of the KS data set, with one change to the stationarity test. Each passed resolution and stationarity checks before use. Its burn in check pools the trajectories, because the single-trajectory rule used for KS rejected stationary forced fields 14\% to 29\% of the time in simulation. The pooled drift of every damped field, $0.013$ to $0.037$, is below the fixed tolerance of $0.05$ itself. No damped field therefore relies on the pooled test's allowance for noise. For reference, unforced KS on the same domain has an exponent of $0.089 \pm 0.001$ and a perturbation growth of $1.10$ over eight steps. Burgers flow has a growth of $0.98$.

\section{The two dimensional fields}
\label{app:twod}

The two dimensional fields share a correlation length and differ in dynamics, which makes the linear field an exact control for KS. Both live on the periodic square of side $8\pi$ with $64 \times 64$ grid points, stored as flattened vectors. The two dimensional KS equation in its scalar form, $u_t = -\nabla^2 u - \nabla^4 u - \tfrac12 |\nabla u|^2$, is integrated with the scheme of the one dimensional fields, with 2/3 dealiasing on each axis and time step $0.05$. The nonlinear term has a positive mean, which only drifts the spatial mean. Once we remove its zero mode, the field keeps the zero mean it starts with. Snapshots are $0.1$ time units apart after a burn in of $4000$ steps from a random field on modes one to four. The data set passed the statistical checks of the one dimensional KS field, with a spectral error of $0.28$ of the noise floor. Its variance and kurtosis errors are $1.1\%$ and $0.6\%$, with a pooled burn in drift of $0.6\%$. Over the same trajectories the largest Lyapunov exponent is $0.180 \pm 0.004$. The linear field is forced diffusion with viscosity $1$, relaxation $0.05$ and Gaussian forcing cut off at seven integer modes, sampled exactly every $0.05$ time units. Its exact stationarity residual is $10^{-18}$. With these settings both fields have a correlation length of $\pi$, or eight grid points, along each axis. Sensors on every fourth grid point leave $3 \times 10^{-5}$ of the KS variance and $2.5\%$ of the linear field's variance above their Nyquist mode. Queries sit on every fifth grid point in each direction, $169$ per snapshot, which covers every position inside a sensor cell. The two dimensional patch CNN applies three $3 \times 3$ convolutions of width $32$ to the masked square window. As the global reference we use a two dimensional Fourier neural operator with $16$ modes per axis. On the linear field a linear patch at lead time one serves as the empirical form of the exact optimum. KS has five training seeds and the linear field three. On the linear field the patch CNN comes within 0.3\% of the exact optimum of every window and history at lead time one. We chose the forcing cutoff, the sensor spacing and the linear field's cadence from exact risks computed before any data set existed. Denser sensors would have left history little to do. With sensors on every second grid point, eight snapshots removed only 11\% of the exact one-snapshot error, because averaging many noisy sensors already removes most of the noise.

\section{Allocating a fixed input budget}
\label{app:budget}

The four splits of the input budget differ by a small imbalance that cannot produce the measured shift. This experiment reuses the solver and data settings of the three one dimensional fields with a new data seed. Its 128 training, 32 validation and 100 test trajectories are therefore new. Because every grid point carries a sensor, a window of half-width $m$ grid points holds $2m+1$ sensors. The four splits are $m = 16, 8, 4, 2$ with histories $1, 2, 4, 8$, which give $33$ to $40$ readings per query. The budget therefore favours history by up to $21\%$. It does so by the same amount at every lead time. A bias that is constant across lead times cannot create a change of the best split with lead time. We trained the patch CNN at lead times $1$, $2$, $4$ and $8$ with five seeds, or three on the linear field. We fixed four predictions in advance. Writing $S$ for the split with the widest window ($m = 16$, one snapshot) and $H$ for the split with the longest history ($m = 2$, eight snapshots), the shift is $C = [\log R(H,8) - \log R(S,8)] - [\log R(H,1) - \log R(S,1)]$. It is positive in KS ($\ci{+1.60}{+1.48}{+1.71}$), Burgers flow ($\ci{+3.09}{+2.83}{+3.32}$) and the linear field ($\ci{+0.305}{+0.296}{+0.313}$, exact $+0.304$). All three intervals exclude zero. The linear value matches the exact one. The fourth prediction concerned the cost in KS of the split chosen at lead time one, which came out as predicted on validation and is unresolved on test. On the linear field the patch CNN sits within $1.6\%$ of the exact optimum on the same test windows, at every split and lead time. The split chosen on validation in KS moves from 17 sensors over two snapshots at lead time one to 33 over one at lead time eight. On the linear field the same split is best at every lead time, for the networks and for the exact predictor.

\begin{figure}[H]
\centering
\includegraphics[width=\textwidth]{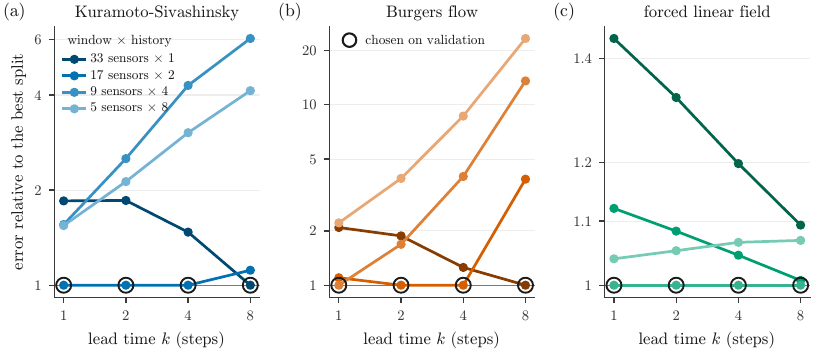}
\caption{Splitting a fixed input budget of about 36 readings per query between window and history (test split). Each line is one split, from 33 sensors over one snapshot (darkest) to five sensors over eight (lightest). The vertical axis gives its error relative to the best split at the same lead time. Rings mark the split chosen on validation. (a) In KS the chosen split moves from 17 sensors over two snapshots to 33 over one between lead times four and eight. (b) In Burgers flow the split with the widest window becomes best as the lead time grows. The split chosen at lead time one ends 13.5 times above the best. (c) On the linear field the same split is best at every lead time, as for the exact predictor. In each field the splits with long histories lose ground as the lead time grows.}
\label{fig:budget}
\end{figure}

\section{Models and training}
\label{app:models}

Patch models receive the masked window over $L$ snapshots as $2L + 1$ channels: the observed values and a sensor mask for each snapshot, plus one coordinate channel. The patch CNN applies three one dimensional convolutions of kernel size three and width $32$ with GELU activations, averages the resulting features over the window and maps them to the prediction with a linear layer. It has $6{,}561$ parameters at $L = 1$ and $7{,}905$ at $L = 8$, whatever the window. For the patch GRU, the same convolutional stack encodes each snapshot separately. A gated recurrent unit of width $32$ then reads the $L$ feature vectors, starting from a zero hidden state in every window. Its final state gives the output. The patch GRU has $12{,}897$ parameters. The patch MLP is a perceptron with two hidden layers of width $32$, with $1{,}985$ to $10{,}369$ parameters depending on the window and history. A single linear layer on the flattened window forms the linear patch model, with $16$ to $4{,}353$. Because the mask comes before any learned operation, no model reads outside its window. Our Fourier neural operator lifts the values and masks of all $L$ snapshots, $2L$ channels on the 256-point grid, with a pointwise layer. It then applies three spectral layers of width $32$ with $64$ retained modes, $396{,}513$ to $396{,}961$ real parameters depending on the history. Training uses AdamW with learning rate $10^{-3}$, weight decay $0$ and batch size $16$. A cosine schedule decays the learning rate to $10^{-5}$ over $150$ epochs, with no early stopping. We keep the checkpoint with the lowest validation error, which under this schedule is the final one for nearly every patch model. Seeds are $2684$ to $2688$. On the linear field the operator's validation error is lowest at its first epoch and rises after that. Its selected checkpoint is therefore the one-epoch model. Each run records its configuration, input hashes, source fingerprint, seed and hardware in a manifest. Every analysis first checks each stored result against its manifest. The $1{,}786$ training runs reported here took $275$ GPU-hours on a shared cluster of mixed NVIDIA GPUs. The evaluations took a small fraction of that time.

\section{Intervals and references}
\label{app:stats}

We first average each model's test error over the windows and query points of each test trajectory, which gives a matrix over training seeds and trajectories. Each of 2000 bootstrap draws resamples the seeds with replacement and, separately, the trajectories with replacement. It then recomputes the statistic from the resampled means. Because every model in a comparison uses the same drawn indices, differences and ratios stay paired. The intervals are percentile intervals, with no adjustment across experiments. With five seeds the seed dimension resamples only five values. For every lead-time comparison we therefore also recompute the point estimate with each seed left out (Table~\ref{tab:leadtime}). The intervals exclude variation across training sets and noise realizations, because each field has one fixed data set. The agreement between validation and test values is our check on selection bias.

Two properties of the models limit what they can show. The patch CNN's footprint is seven grid points. Across a wider window it therefore sums local features and cannot form products of distant readings. The patch MLP has no such restriction. Its agreement with the CNN at one and two correlation lengths (Section~\ref{sec:leadtime}) suggests that the footprint does not drive the small-window results. In both nonlinear fields the patch CNN whose window spans the whole ring is limited by optimization, with a seed spread several times that of any narrower window. For the global reference, the Fourier neural operator keeps 64 modes. A 12-mode version we tried first discarded modes that the sensors carry and produced negative apparent gaps.

\section{Planned comparisons and their outcomes}
\label{app:plans}

Each experiment has a dated design document and a written analysis plan. The plan names the comparisons that decide the experiment's outcome, the direction we predicted for each and the interval level. We recorded a content hash of each plan after the validation analysis and before any test evaluation. Table~\ref{tab:decisions} lists all fifty planned comparisons of the seven experiments with their outcomes. Forty came out as predicted, eight did not and two are unresolved. Forty-nine of the fifty have the same outcome on validation as on test.

We count the failed predictions as part of the result. The most important one is our expectation that in KS the history ratio would rise with lead time at one sensor spacing. That failure is the inversion of Section~\ref{sec:leadtime}. We also expected eight snapshots of a single sensor to close less than half of that sensor's spatial gap. This held in KS. Burgers flow and both linear variants carried the same prediction over unchanged. There it failed. This failure showed that the first failure mode depends on the field (Section~\ref{sec:context}). On the diffusive linear field the history ratio fell with the window, against our prediction. Our written motivation had quoted the exact gap ratios, which do fall there. Still, the comparison we fixed in advance used the history ratio. We therefore report it as a failure. For both linear variants we also asked the fitted gap against the global operator to lie within 25\% of the exact gap at every window up to $3.2$. This fails where the exact gap is below $10^{-5}$, because the operator's own excess of $0.004$ to $0.006$ is larger than the gap there. The linear-field plan also named the lead-time comparison at $r = 1.6$, which its analysis code did not compute. We computed it afterwards with the same estimator on both splits.

Two comparisons are unresolved. In the damped KS family the non-chaotic field shows no lead-time difference, for the reason Section~\ref{sec:leadtime} gives. The other is the cost in KS of reusing at lead time eight the split of a fixed input budget chosen at lead time one. It came out as predicted on validation and is unresolved on test, the one planned comparison that differs between the splits. One comparison outside the planned set also changed. Burgers flow observed through eight sensors at $1.8$ correlation lengths excluded zero on validation. On test its interval includes zero, although its point estimate is positive on both splits.

\begin{table}[ht]
\centering\footnotesize\setlength{\tabcolsep}{3pt}
\caption{The planned comparisons of the seven experiments on the test split, with the direction we predicted before any test evaluation. Intervals are at 99.17\%, or at 98.75\% for the damped KS family and the input budget. 40 of 50 comparisons came out as predicted. 49 of 50 agree between validation and test. $\rho(r)$ is the history ratio at lead time one and window $r$.}
\label{tab:decisions}
\resizebox{\textwidth}{!}{\begin{tabular}{lllll}
\toprule
Experiment & Comparison & Predicted & Test value & Outcome \\
\midrule
KS, 64 and 32 sensors & 64 sensors, $\rho(0.79) - \rho(1.6)$ & $> 0$ & $+0.307$ $[+0.292, +0.322]$ & as predicted \\
 & 64 sensors, $\rho(1.6) - \rho(3.2)$ & $> 0$ & $+0.216$ $[+0.201, +0.231]$ & as predicted \\
 & 32 sensors, $\rho(1.58) - \rho(3.2)$ & $> 0$ & $+0.111$ $[+0.101, +0.122]$ & as predicted \\
 & 32 sensors, $\rho(3.2) - \rho(6.3)$ & $> 0$ & $+0.083$ $[+0.067, +0.097]$ & as predicted \\
 & one-sensor closed fraction at $r=0.79$ & $< 0.5$ & $+0.195$ $[+0.183, +0.208]$ & as predicted \\
 & lead-time difference at $r=1.6$ & $> 0$ & $-0.177$ $[-0.186, -0.167]$ & not as predicted \\
\addlinespace
Burgers flow, 64 sensors & $\rho(0.05) - \rho(0.1)$ & $> 0$ & $+0.082$ $[+0.063, +0.102]$ & as predicted \\
 & $\rho(0.1) - \rho(0.2)$ & $> 0$ & $+0.158$ $[+0.142, +0.175]$ & as predicted \\
 & one-sensor closed fraction at $r=0.05$ & $< 0.5$ & $+0.752$ $[+0.678, +0.938]$ & not as predicted \\
 & lead-time difference at $r=0.05$ & $> 0$ & $+0.061$ $[+0.039, +0.082]$ & as predicted \\
 & lead-time difference at $r=0.1$ & $> 0$ & $+0.081$ $[+0.055, +0.112]$ & as predicted \\
 & closed fraction, lead 8 minus lead 1, $r=0.05$ & $< 0$ & $-0.252$ $[-0.516, -0.165]$ & as predicted \\
\addlinespace
Linear field, diffusive & linear patch excess over the exact optimum & upper bound $\le 10\%$ & worst upper bound $2.9\%$ & as predicted \\
 & CNN excess minus linear patch excess & upper bound $\le 0.05$ & $+0.0023$ $[+0.0010, +0.0038]$ & as predicted \\
 & $\rho(0.79) - \rho(1.6)$ & $> 0$ & $-0.018$ $[-0.022, -0.014]$ & not as predicted \\
 & $\rho(1.6) - \rho(3.2)$, coded in addition & $> 0$ & $-0.052$ $[-0.056, -0.048]$ & not as predicted \\
 & one-sensor closed fraction, carried over from KS & $< 0.5$ & $+1.499$ $[+1.456, +1.546]$ & not as predicted \\
 & lead-time difference at $r=0.79$ & $> 0$ & $+0.177$ $[+0.171, +0.183]$ & as predicted \\
 & lead-time difference at $r=1.6$, computed afterwards & $> 0$ & $+0.192$ $[+0.186, +0.199]$ & as predicted \\
 & fitted gap against the operator within $\pm25\%$ of the exact gap & every window & fails at $r = 3.2$ & not as predicted \\
\addlinespace
Linear field, advective & linear patch excess over the exact optimum & upper bound $\le 10\%$ & worst upper bound $3.3\%$ & as predicted \\
 & CNN excess minus linear patch excess & upper bound $\le 0.05$ & $+0.0029$ $[+0.0004, +0.0057]$ & as predicted \\
 & $\rho(0.79) - \rho(1.6)$ & $> 0$ & $+0.055$ $[+0.050, +0.060]$ & as predicted \\
 & $\rho(1.6) - \rho(3.2)$, coded in addition & $> 0$ & $+0.053$ $[+0.047, +0.059]$ & as predicted \\
 & one-sensor closed fraction, carried over from KS & $< 0.5$ & $+0.919$ $[+0.905, +0.934]$ & not as predicted \\
 & lead-time difference at $r=0.79$ & $> 0$ & $+0.310$ $[+0.302, +0.318]$ & as predicted \\
 & lead-time difference at $r=1.6$, computed afterwards & $> 0$ & $+0.353$ $[+0.347, +0.359]$ & as predicted \\
 & fitted gap against the operator within $\pm25\%$ of the exact gap & every window & fails at $r = 3.2$ & not as predicted \\
\addlinespace
Instability or resolution & lead-time difference, KS, 256 sensors, 0.25 CL & $< 0$ & $-0.275$ $[-0.281, -0.270]$ & as predicted \\
 & lead-time difference, KS, 256 sensors, 1 CL & $< 0$ & $-0.181$ $[-0.190, -0.172]$ & as predicted \\
 & lead-time difference, Burgers, 8 sensors, 0.45 CL & $> 0$ & $+0.059$ $[+0.042, +0.078]$ & as predicted \\
 & lead-time difference, Burgers, 8 sensors, 0.89 CL & $> 0$ & $+0.024$ $[+0.003, +0.045]$ & as predicted \\
 & lead-time difference, KS, GRU, 1 CL & $< 0$ & $-0.205$ $[-0.215, -0.195]$ & as predicted \\
 & lead-time difference, KS, MLP, 1 CL & $< 0$ & $-0.203$ $[-0.212, -0.193]$ & as predicted \\
\addlinespace
Damped KS family & $D$, $\gamma = 0$, 0.25 CL & $< 0$ & $-0.366$ $[-0.376, -0.356]$ & as predicted \\
 & $D$, $\gamma = 0.30$, 0.25 CL & $> 0$ & $+0.000$ $[-0.001, +0.001]$ & unresolved \\
 & $D$, $\gamma = 0.30$ minus $\gamma = 0$, 0.25 CL & $> 0$ & $+0.367$ $[+0.357, +0.377]$ & as predicted \\
 & $D$, $\gamma = 0.30$ minus $\gamma = 0$, 1 CL & $> 0$ & $+0.139$ $[+0.126, +0.152]$ & as predicted \\
 & rank of $D$ at 0.25 CL against the exponent & Spearman $-1$ & $-1$, order in all draws & as predicted \\
\addlinespace
Two dimensional fields & lead-time difference, 2D KS, 0.5 CL & $< 0$ & $-0.357$ $[-0.371, -0.345]$ & as predicted \\
 & lead-time difference, 2D KS, 1 CL & $< 0$ & $-0.073$ $[-0.104, -0.046]$ & as predicted \\
 & lead-time difference, 2D KS, 2 CL & $> 0$ & $+0.060$ $[+0.006, +0.105]$ & as predicted \\
 & lead-time difference, 2D linear, 0.5 CL & $> 0$ & $+0.178$ $[+0.174, +0.182]$ & as predicted \\
 & 2D KS, $\rho(0.5\,\mathrm{CL}) - \rho(1\,\mathrm{CL})$ & $> 0$ & $+0.266$ $[+0.245, +0.288]$ & as predicted \\
 & 2D KS, $\rho(1\,\mathrm{CL}) - \rho(2\,\mathrm{CL})$ & $> 0$ & $+0.090$ $[+0.013, +0.144]$ & as predicted \\
 & 2D CNN excess over the exact optimum & upper bound $\le 3\%$ & worst upper bound $0.7\%$ & as predicted \\
\addlinespace
Fixed input budget & KS, shift of the best split toward wider windows, $C$ & $> 0$ & $+1.597$ $[+1.475, +1.708]$ & as predicted \\
 & Burgers, shift of the best split toward wider windows, $C$ & $> 0$ & $+3.089$ $[+2.826, +3.323]$ & as predicted \\
 & linear, shift of the best split toward wider windows, $C$ & $> 0$ & $+0.305$ $[+0.296, +0.313]$ & as predicted \\
 & KS, cost at lead 8 of the split chosen at lead 1 & $> 0$ & $+0.115$ $[-0.011, +0.239]$ & unresolved (differs on validation) \\
\bottomrule
\end{tabular}}
\end{table}

\section{Calibration against the exact optimum}
\label{app:calib}

\begin{figure}[H]
\centering
\includegraphics[width=\textwidth]{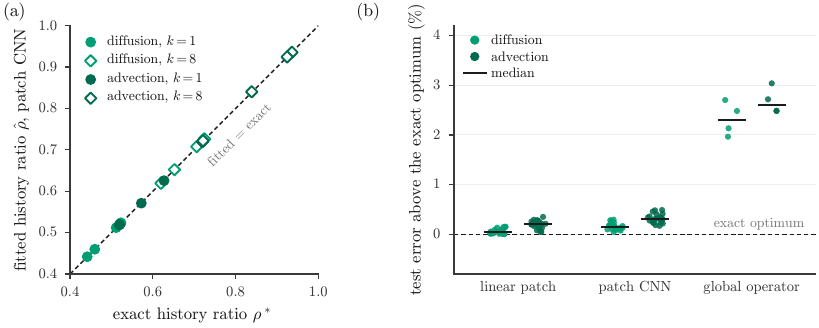}
\caption{Calibration on the forced linear field, where the exact risk of every information set is known. (a) The history ratio of the trained patch CNN against the exact ratio of the same information set, for both variants of the field, both lead times and every window from one to nine sensors. Every point lies within $0.002$ of the identity line. (b) Test error of each model family relative to the exact optimum of its own information set, one point per window and history at lead time one, with bars at the median. At lead time one the linear patch and the patch CNN sit within 0.5\% of the optimum. Over all lead times they sit within 0.65\%. The global Fourier neural operator sits 2\% to 3\% above the optimum, which we attribute to its fit to the sensor noise.}
\label{fig:calibration}
\end{figure}

\section{The error surface of the two nonlinear fields}
\label{app:surface}

\begin{figure}[H]
\centering
\includegraphics[width=\textwidth]{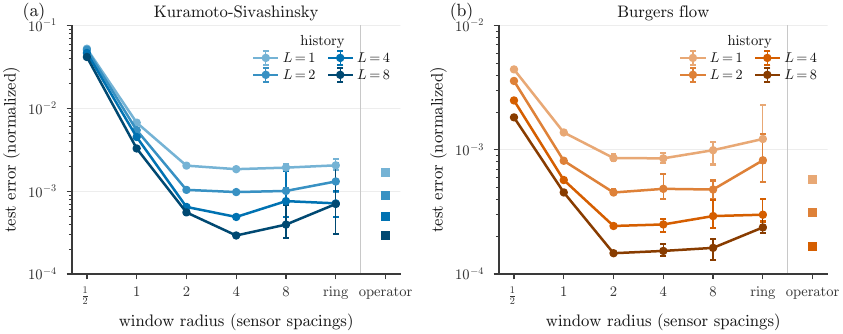}
\caption{Test error against window and history at lead time one, with 64 sensors. Each curve is one history $L$. Points are means over five training seeds. Error bars span the seeds. The rightmost square in each panel is the global Fourier neural operator reading every sensor with the same history. (a) In KS the error falls steeply up to a window of four or five sensors and is nearly flat beyond it. The window spanning the whole ring gives the noisiest result, because a 256-point additive patch is hard to train. (b) Burgers flow has the same shape, with errors an order of magnitude lower.}
\label{fig:risk}
\end{figure}

\section{Lead-time comparisons and gap ratios}
\label{app:contrasts}

The lead-time results hold at the windows we tested and under refits that leave out one training seed. Table~\ref{tab:decisive} gives the six planned comparisons of the experiment of Section~\ref{sec:leadtime} that separates instability from resolution. Table~\ref{tab:leadtime} gives the history ratios, their lead-time difference and the error reduction from history for every window of every field, including windows without an advance prediction. Its last column is the range of the difference when each training seed is left out in turn. No sign changes under that check. The error reduction $B$ grows with lead time at the small windows of both nonlinear fields and of the three unstable damped fields. It shrinks on both linear fields and on the eight-sensor Burgers array. The gap ratio follows the exact ordering only in part. It can disagree with the history ratio. On the noisy linear field the history ratio rises from $0.44$ to $0.52$ as the window widens. Over the same windows the exact gap ratio falls from $0.28$ to $0.09$. Table~\ref{tab:gapratio} gives it against the global operator, the substitution measure of the exact analysis, for every array with an operator reference. At lead time one it falls with the window with 32 KS sensors and in Burgers flow. With 64 KS sensors it falls from $0.82$ to $0.59$ but then rises to $\ci{0.72}{0.68}{0.77}$. At lead time eight in KS it rises at every step, with $0.32$, $0.54$ and $0.91$ at one, two and four correlation lengths.

\begin{table}[H]
\centering\footnotesize\setlength{\tabcolsep}{4pt}
\caption{The six planned lead-time comparisons of the experiment that separates instability from resolution (test split, 99.17\% intervals). A negative difference with its interval below zero means that eight snapshots remove a larger fraction of the one-snapshot error at lead time eight. Windows are given in correlation lengths (CL) and in the sensors a query sees. The last column gives the error reduction from history, $B$, at lead times one and eight. The outcomes are the same on validation.}
\label{tab:decisive}
\resizebox{\textwidth}{!}{\begin{tabular}{llcclc}
\toprule
System and model & Window & $\rho$, $k{=}1 \to 8$ & Difference [99.17\%] & Outcome & $B \times 10^{3}$, $k{=}1 \to 8$ \\
\midrule
KS, 256 sensors, CNN & 0.25 CL, 3 sensors & $0.41 \to 0.14$ & $\ci{-0.275}{-0.281}{-0.270}$ & $\rho$ falls & $0.83 \to 28.2$ \\
KS, 256 sensors, CNN & 1 CL, 9 sensors & $0.40 \to 0.22$ & $\ci{-0.181}{-0.190}{-0.172}$ & $\rho$ falls & $0.44 \to 6.41$ \\
KS, 64 sensors, GRU & 1 CL, 2 to 3 sensors & $0.53 \to 0.32$ & $\ci{-0.205}{-0.215}{-0.195}$ & $\rho$ falls & $3.17 \to 17.4$ \\
KS, 64 sensors, MLP & 1 CL, 2 to 3 sensors & $0.52 \to 0.32$ & $\ci{-0.203}{-0.212}{-0.193}$ & $\rho$ falls & $3.22 \to 17.6$ \\
Burgers, 8 sensors, CNN & 0.45 CL, 1 sensor & $0.69 \to 0.75$ & $\ci{+0.059}{+0.042}{+0.078}$ & $\rho$ rises & $48.9 \to 41.9$ \\
Burgers, 8 sensors, CNN & 0.89 CL, 2 to 3 sensors & $0.69 \to 0.71$ & $\ci{+0.024}{+0.003}{+0.045}$ & $\rho$ rises & $31.8 \to 28.0$ \\
\bottomrule
\end{tabular}}
\end{table}

\begin{table}[ht]
\centering\footnotesize\setlength{\tabcolsep}{3pt}
\caption{Lead-time comparisons on the test split. The history ratios at lead times one and eight are $\rho_1$ and $\rho_8$. $D = \rho_8 - \rho_1$ is their difference, with its 99.17\% interval. $B_1$ and $B_8$ are the error reductions from history. The last column gives the range of $D$ over the five fits that each leave out one training seed. Windows are in correlation lengths (CL) of each field. A full-ring window holds every sensor.}
\label{tab:leadtime}
\resizebox{\textwidth}{!}{\begin{tabular}{llcccccc}
\toprule
System & Window & $\rho_1$ & $\rho_8$ & $D$ [99.17\%] & $B_1 \times 10^3$ & $B_8 \times 10^3$ & $D$, one seed left out \\
\midrule
KS, 64 sensors, CNN & 1 CL & 0.488 & 0.311 & $-0.177$ $[-0.186, -0.167]$ & 3.45 & 17.9 & $[-0.177, -0.175]$ \\
 & 2 CL & 0.272 & 0.449 & $+0.177$ $[+0.164, +0.190]$ & 1.48 & 4.12 & $[+0.173, +0.180]$ \\
 & 4 CL & 0.158 & 0.355 & $+0.197$ $[+0.166, +0.247]$ & 1.56 & 1.66 & $[+0.182, +0.207]$ \\
 & full ring & 0.341 & 0.510 & $+0.169$ $[-0.226, +0.453]$ & 1.35 & 1.61 & $[+0.099, +0.298]$ \\
\addlinespace
KS, 256 sensors, CNN & 0.25 CL & 0.412 & 0.137 & $-0.275$ $[-0.281, -0.270]$ & 0.825 & 28.2 & $[-0.277, -0.274]$ \\
 & 0.5 CL & 0.354 & 0.166 & $-0.187$ $[-0.192, -0.183]$ & 0.748 & 14 & $[-0.188, -0.187]$ \\
 & 1 CL & 0.398 & 0.217 & $-0.181$ $[-0.190, -0.172]$ & 0.439 & 6.41 & $[-0.182, -0.180]$ \\
 & 2 CL & 0.145 & 0.342 & $+0.198$ $[+0.190, +0.206]$ & 0.461 & 0.772 & $[+0.196, +0.199]$ \\
 & full ring & 0.252 & 0.748 & $+0.496$ $[+0.107, +1.080]$ & 0.402 & 0.205 & $[+0.258, +0.652]$ \\
\addlinespace
KS, 64 sensors, GRU & 1 CL & 0.528 & 0.323 & $-0.205$ $[-0.215, -0.195]$ & 3.17 & 17.4 & $[-0.206, -0.205]$ \\
 & 2 CL & 0.257 & 0.452 & $+0.196$ $[+0.185, +0.207]$ & 1.5 & 3.56 & $[+0.193, +0.197]$ \\
\addlinespace
KS, 64 sensors, MLP & 1 CL & 0.520 & 0.318 & $-0.203$ $[-0.212, -0.193]$ & 3.22 & 17.6 & $[-0.203, -0.202]$ \\
 & 2 CL & 0.220 & 0.393 & $+0.172$ $[+0.162, +0.182]$ & 1.56 & 3.7 & $[+0.171, +0.174]$ \\
\addlinespace
Burgers, 64 sensors & 0.057 CL & 0.411 & 0.472 & $+0.061$ $[+0.039, +0.082]$ & 2.62 & 17.5 & $[+0.060, +0.061]$ \\
 & 0.11 CL & 0.329 & 0.410 & $+0.081$ $[+0.055, +0.112]$ & 0.926 & 10.7 & $[+0.080, +0.082]$ \\
 & 0.23 CL & 0.171 & 0.534 & $+0.363$ $[+0.334, +0.393]$ & 0.712 & 1.89 & $[+0.359, +0.367]$ \\
 & 0.45 CL & 0.180 & 0.379 & $+0.199$ $[+0.121, +0.265]$ & 0.701 & 0.786 & $[+0.186, +0.225]$ \\
 & full ring & 0.194 & 0.630 & $+0.436$ $[+0.228, +0.821]$ & 0.986 & 0.705 & $[+0.349, +0.524]$ \\
\addlinespace
Burgers, 8 sensors & 0.45 CL & 0.689 & 0.748 & $+0.059$ $[+0.042, +0.078]$ & 48.9 & 41.9 & $[+0.058, +0.061]$ \\
 & 0.89 CL & 0.689 & 0.713 & $+0.024$ $[+0.003, +0.045]$ & 31.8 & 28 & $[+0.020, +0.028]$ \\
 & 1.8 CL & 0.713 & 0.738 & $+0.025$ $[-0.009, +0.052]$ & 28.3 & 24.5 & $[+0.019, +0.033]$ \\
 & full ring & 0.739 & 0.760 & $+0.022$ $[+0.002, +0.040]$ & 24.3 & 21.2 & $[+0.019, +0.027]$ \\
\addlinespace
Linear, diffusive & 0.15 CL & 0.442 & 0.619 & $+0.177$ $[+0.171, +0.183]$ & 290 & 242 & $[+0.177, +0.178]$ \\
 & 0.31 CL & 0.460 & 0.652 & $+0.192$ $[+0.186, +0.199]$ & 220 & 183 & $[+0.192, +0.192]$ \\
 & 0.63 CL & 0.512 & 0.708 & $+0.196$ $[+0.190, +0.202]$ & 174 & 134 & $[+0.196, +0.196]$ \\
 & 1.2 CL & 0.523 & 0.726 & $+0.203$ $[+0.197, +0.209]$ & 166 & 122 & $[+0.203, +0.203]$ \\
 & full ring & 0.523 & 0.726 & $+0.203$ $[+0.197, +0.209]$ & 166 & 122 & $[+0.203, +0.203]$ \\
\addlinespace
Linear, advective & 0.15 CL & 0.626 & 0.936 & $+0.310$ $[+0.302, +0.318]$ & 204 & 57.4 & $[+0.309, +0.310]$ \\
 & 0.31 CL & 0.571 & 0.925 & $+0.353$ $[+0.347, +0.359]$ & 180 & 60.3 & $[+0.353, +0.353]$ \\
 & 0.63 CL & 0.518 & 0.840 & $+0.322$ $[+0.315, +0.329]$ & 174 & 98.3 & $[+0.321, +0.323]$ \\
 & 1.2 CL & 0.519 & 0.720 & $+0.200$ $[+0.194, +0.206]$ & 170 & 129 & $[+0.200, +0.200]$ \\
 & full ring & 0.519 & 0.722 & $+0.203$ $[+0.196, +0.208]$ & 170 & 126 & $[+0.202, +0.203]$ \\
\addlinespace
Damped KS, $\gamma = 0$ & 0.25 CL & 0.798 & 0.431 & $-0.366$ $[-0.377, -0.356]$ & 0.719 & 43 & $[-0.367, -0.366]$ \\
 & 0.5 CL & 0.821 & 0.633 & $-0.188$ $[-0.201, -0.175]$ & 0.559 & 15.7 & $[-0.189, -0.188]$ \\
 & 1 CL & 0.906 & 0.768 & $-0.137$ $[-0.150, -0.124]$ & 0.237 & 6.68 & $[-0.138, -0.137]$ \\
 & 2 CL & 0.934 & 0.973 & $+0.039$ $[+0.030, +0.048]$ & 0.151 & 0.486 & $[+0.037, +0.042]$ \\
\addlinespace
Damped KS, $\gamma = 0.10$ & 0.25 CL & 0.950 & 0.814 & $-0.136$ $[-0.154, -0.119]$ & 0.244 & 9.61 & $[-0.136, -0.136]$ \\
 & 0.5 CL & 0.921 & 0.906 & $-0.015$ $[-0.024, -0.007]$ & 0.374 & 4.03 & $[-0.015, -0.015]$ \\
 & 1 CL & 0.959 & 0.933 & $-0.025$ $[-0.032, -0.020]$ & 0.179 & 2.52 & $[-0.026, -0.024]$ \\
 & 2 CL & 0.970 & 0.993 & $+0.024$ $[+0.019, +0.031]$ & 0.125 & 0.204 & $[+0.022, +0.025]$ \\
\addlinespace
Damped KS, $\gamma = 0.15$ & 0.25 CL & 0.977 & 0.936 & $-0.041$ $[-0.048, -0.033]$ & 0.168 & 3.97 & $[-0.041, -0.041]$ \\
 & 0.5 CL & 0.949 & 0.966 & $+0.018$ $[+0.015, +0.021]$ & 0.364 & 1.94 & $[+0.017, +0.018]$ \\
 & 1 CL & 0.976 & 0.969 & $-0.007$ $[-0.010, -0.003]$ & 0.159 & 1.65 & $[-0.007, -0.007]$ \\
 & 2 CL & 0.982 & 1.000 & $+0.018$ $[+0.015, +0.023]$ & 0.119 & 0.0114 & $[+0.017, +0.019]$ \\
\addlinespace
Damped KS, $\gamma = 0.30$ & 0.25 CL & 0.996 & 0.997 & $+0.000$ $[-0.001, +0.002]$ & 0.103 & 0.652 & $[+0.000, +0.000]$ \\
 & 0.5 CL & 0.988 & 0.997 & $+0.008$ $[+0.007, +0.010]$ & 0.343 & 0.674 & $[+0.008, +0.009]$ \\
 & 1 CL & 0.997 & 0.999 & $+0.002$ $[+0.000, +0.004]$ & 0.0784 & 0.129 & $[+0.002, +0.003]$ \\
 & 2 CL & 0.998 & 1.003 & $+0.005$ $[+0.003, +0.008]$ & 0.0633 & -0.629 & $[+0.004, +0.006]$ \\
\bottomrule
\end{tabular}}
\end{table}

\begin{table}[ht]
\centering\footnotesize\setlength{\tabcolsep}{4pt}
\caption{Gap ratio $\hat g(r,k) = \hat\gap(r,8,k)/\hat\gap(r,1,k)$ against the global operator, the measure of substitution in the exact analysis, with 95\% intervals (test split). We added this analysis after the test evaluation. We omit rows whose one-snapshot gap is below $2 \times 10^{-4}$, because a ratio of two such small gaps carries no information.}
\label{tab:gapratio}
\begin{tabular}{llcc}
\toprule
Array & Window & lead time 1 & lead time 8 \\
\midrule
KS, 64 sensors & 0.5 CL & $0.81$ $[0.81, 0.82]$ &  \\
 & 1 CL & $0.59$ $[0.58, 0.61]$ & $0.32$ $[0.31, 0.33]$ \\
 & 2 CL & $0.72$ $[0.68, 0.77]$ & $0.54$ $[0.52, 0.55]$ \\
 & 4 CL &  & $0.91$ $[0.67, 1.37]$ \\
 & 8 CL & $0.40$ $[-0.02, 1.10]$ &  \\
\addlinespace
KS, 32 sensors & 1 CL & $0.79$ $[0.76, 0.81]$ &  \\
 & 2 CL & $0.56$ $[0.51, 0.61]$ &  \\
 & 4 CL & $0.19$ $[0.08, 0.29]$ &  \\
 & 8 CL & $0.14$ $[-0.01, 0.27]$ &  \\
\addlinespace
KS, 256 sensors & 0.25 CL & $0.53$ $[0.52, 0.54]$ & $0.14$ $[0.13, 0.14]$ \\
 & 0.5 CL & $0.47$ $[0.46, 0.48]$ & $0.16$ $[0.16, 0.17]$ \\
 & 1 CL & $0.78$ $[0.76, 0.80]$ & $0.22$ $[0.21, 0.22]$ \\
 & 2 CL &  & $0.46$ $[0.44, 0.48]$ \\
\addlinespace
Burgers, 64 sensors & 0.057 CL & $0.45$ $[0.43, 0.47]$ & $0.48$ $[0.46, 0.49]$ \\
 & 0.11 CL & $0.46$ $[0.44, 0.48]$ & $0.42$ $[0.39, 0.44]$ \\
 & 0.23 CL & $0.21$ $[0.17, 0.26]$ & $0.58$ $[0.56, 0.61]$ \\
 & 0.45 CL & $0.24$ $[0.17, 0.36]$ & $0.51$ $[0.43, 0.59]$ \\
 & 0.89 CL & $0.18$ $[0.11, 0.34]$ &  \\
\addlinespace
Burgers, 8 sensors & 0.45 CL & $0.72$ $[0.68, 0.76]$ & $0.77$ $[0.74, 0.80]$ \\
 & 0.89 CL & $0.75$ $[0.70, 0.80]$ & $0.74$ $[0.70, 0.78]$ \\
 & 1.8 CL & $0.79$ $[0.74, 0.84]$ & $0.78$ $[0.73, 0.82]$ \\
\bottomrule
\end{tabular}
\end{table}

\section{Sufficiency at relative tolerances}
\label{app:suff}

A relative tolerance gives a more informative sufficiency radius than an absolute one. An absolute tolerance of $10^{-3}$ exceeds the whole KS error at eight snapshots. At long history the absolute sufficiency radius is therefore nearly vacuous. Table~\ref{tab:suff} gives each window's gap relative to the global operator's error. In KS the eight-or-nine-sensor window sits 10\% above the operator at one and two snapshots and matches it from four on. It therefore suffices at a 5\% relative tolerance only from four snapshots. In Burgers flow no window gets within 40\% of the operator at any history, although the absolute gaps are below $3 \times 10^{-4}$. In Burgers flow our data cannot tell whether this gap comes from information beyond the window or from the operator's more accurate treatment of shocks.

On coarse arrays the local models fall well short of the operator at one snapshot. History closes most of that shortfall. With 32 KS sensors the shortfall of the local models, $\ci{0.054}{0.046}{0.063}$ at one snapshot, drops to $\ci{0.008}{-0.003}{0.019}$ at eight. Burgers flow observed by eight sensors shows a similar shortfall. Part of these shortfalls on coarse arrays may come from the difficulty the local models have in reconstructing the field between distant sensors.

\begin{table}[ht]
\centering\footnotesize\setlength{\tabcolsep}{4pt}
\caption{Relative spatial gap of each window against the global operator, $\hat\gap(r,L,1)/\hat R_{\mathrm{op}}(L,1)$, at lead time one (test split, point estimates). The last two columns give the smallest window whose upper 95\% bound is at most 5\% or 10\%. An entry of none means that no window we trained meets the tolerance.}
\label{tab:suff}
\begin{tabular}{llccccccc}
\toprule
KS, 64 sensors & $L$ & 0.5 CL & 1 CL & 2 CL & 4 CL & 8 CL & 5\% & 10\% \\
\midrule
 & 1 & 30.39 & 3.01 & 0.21 & 0.10 & 0.15 & none & none \\
 & 2 & 56.73 & 5.15 & 0.17 & 0.10 & 0.13 & none & none \\
 & 4 & 94.49 & 8.22 & 0.32 & -0.00 & 0.55 & 4 CL & 4 CL \\
 & 8 & 141.01 & 10.15 & 0.88 & -0.01 & 0.34 & 4 CL & 4 CL \\
\addlinespace
Burgers, 64 sensors & $L$ & 0.057 CL & 0.11 CL & 0.23 CL & 0.45 CL & 0.89 CL & 5\% & 10\% \\
\midrule
 & 1 & 6.72 & 1.40 & 0.49 & 0.48 & 0.73 & none & none \\
 & 2 & 10.51 & 1.62 & 0.45 & 0.56 & 0.54 & none & none \\
 & 4 & 14.02 & 2.43 & 0.46 & 0.51 & 0.76 & none & none \\
 & 8 & 20.02 & 4.23 & 0.69 & 0.77 & 0.87 & none & none \\
\bottomrule
\end{tabular}
\end{table}

\section{Reproducibility}
\label{app:repro}

All configurations are strict specifications. All data sets and results carry manifests with content hashes. The analyses read only results that match their recorded hashes. Each experiment has a dated design document and a written analysis plan. We recorded the hash of each plan before the single test evaluation of that experiment. From the analysis tables, one script builds the figures in this paper and records the hash of every input. A second script writes the appendix tables. The repository at \url{https://anonymous.4open.science/r/memory-space-forecasting-5633} holds the code, configurations, analysis plans and analysis tables.

\end{document}